\theoremstyle{plain}
\newtheorem{theorem}{Theorem}[section]
\theoremstyle{definition}
\newtheorem{definition}[theorem]{Definition}
\theoremstyle{remark}
\newcommand{\xmark}{\ding{55}}%
\DeclareMathOperator*{\argmin}{arg\,min}
\icmltitlerunning{An analytic theory of creativity in convolutional diffusion models}
\begin{document}

\twocolumn[
\icmltitle{An analytic theory of creativity in convolutional diffusion models}

% It is OKAY to include author information, even for blind
% submissions: the style file will automatically remove it for you
% unless you've provided the [accepted] option to the icml2023
% package.

% List of affiliations: The first argument should be a (short)
% identifier you will use later to specify author affiliations
% Academic affiliations should list Department, University, City, Region, Country
% Industry affiliations should list Company, City, Region, Country

% You can specify symbols, otherwise they are numbered in order.
% Ideally, you should not use this facility. Affiliations will be numbered
% in order of appearance and this is the preferred way.

% \icmlsetsymbol{equal}{*}

\begin{icmlauthorlist}

\icmlauthor{Mason Kamb}{yyy}
\icmlauthor{Surya Ganguli}{yyy}

% \icmlauthor{Firstname4 Lastname4}{sch}
% \icmlauthor{Firstname5 Lastname5}{yyy}
% \icmlauthor{Firstname6 Lastname6}{sch,yyy,comp}
% \icmlauthor{Firstname7 Lastname7}{comp}
% %\icmlauthor{}{sch}
% \icmlauthor{Firstname8 Lastname8}{sch}
% \icmlauthor{Firstname8 Lastname8}{yyy,comp}
%\icmlauthor{}{sch}
%\icmlauthor{}{sch}
\end{icmlauthorlist}

% \icmlaffiliation{comp}{Company Name, Location, Country}
\icmlcorrespondingauthor{Mason Kamb}{kambm@stanford.edu}
\icmlcorrespondingauthor{Surya Ganguli}{sganguli@stanford.edu}

\icmlaffiliation{yyy}{Department of Applied Physics, Stanford University, California, United States}

% You may provide any keywords that you
% find helpful for describing your paper; these are used to populate
% the "keywords" metadata in the PDF but will not be shown in the document
\icmlkeywords{Machine Learning, ICML}

\vskip 0.3in
]

% this must go after the closing bracket ] following \twocolumn[ ...

% This command actually creates the footnote in the first column
% listing the affiliations and the copyright notice.
% The command takes one argument, which is text to display at the start of the footnote.
% The \icmlEqualContribution command is standard text for equal contribution.
% Remove it (just {}) if you do not need this facility.

\printAffiliationsAndNotice{}  % leave blank if no need to mention equal contribution
% \printAffiliationsAndNotice{\icmlEqualContribution} % otherwise use the standard text.

% \Floatbarrier

\begin{abstract}

We obtain an analytic, interpretable and predictive theory of creativity in convolutional diffusion models. Indeed, score-matching diffusion models can generate highly original images that lie far from their training data.  However, optimal score-matching theory suggests that these models should only be able to produce memorized training examples. To reconcile this theory-experiment gap, we identify two simple inductive biases, locality and equivariance, that: (1) induce a form of combinatorial creativity by preventing optimal score-matching; (2) result in fully analytic, completely mechanistically interpretable, local score (LS) and equivariant local score (ELS) machines that, (3) after calibrating a single time-dependent hyperparameter can quantitatively predict the outputs of trained convolution only diffusion models (like ResNets and UNets) with high accuracy (median $r^2$ of $0.95, 0.94, 0.94, 0.96$ for our top model on CIFAR10, FashionMNIST, MNIST, and CelebA). Our model reveals a {\it locally consistent patch mosaic} mechanism of creativity, in which diffusion models create exponentially many novel images by mixing and matching different local training set patches at different scales and image locations. Our theory also partially predicts the outputs of pre-trained self-attention enabled UNets (median $r^2 \sim 0.77$ on CIFAR10), revealing an intriguing role for attention in carving out semantic coherence from local patch mosaics.  

\end{abstract}

\section{Introduction and related work}
\begin{figure}[t]
    \centering
    \includegraphics[width=\linewidth]{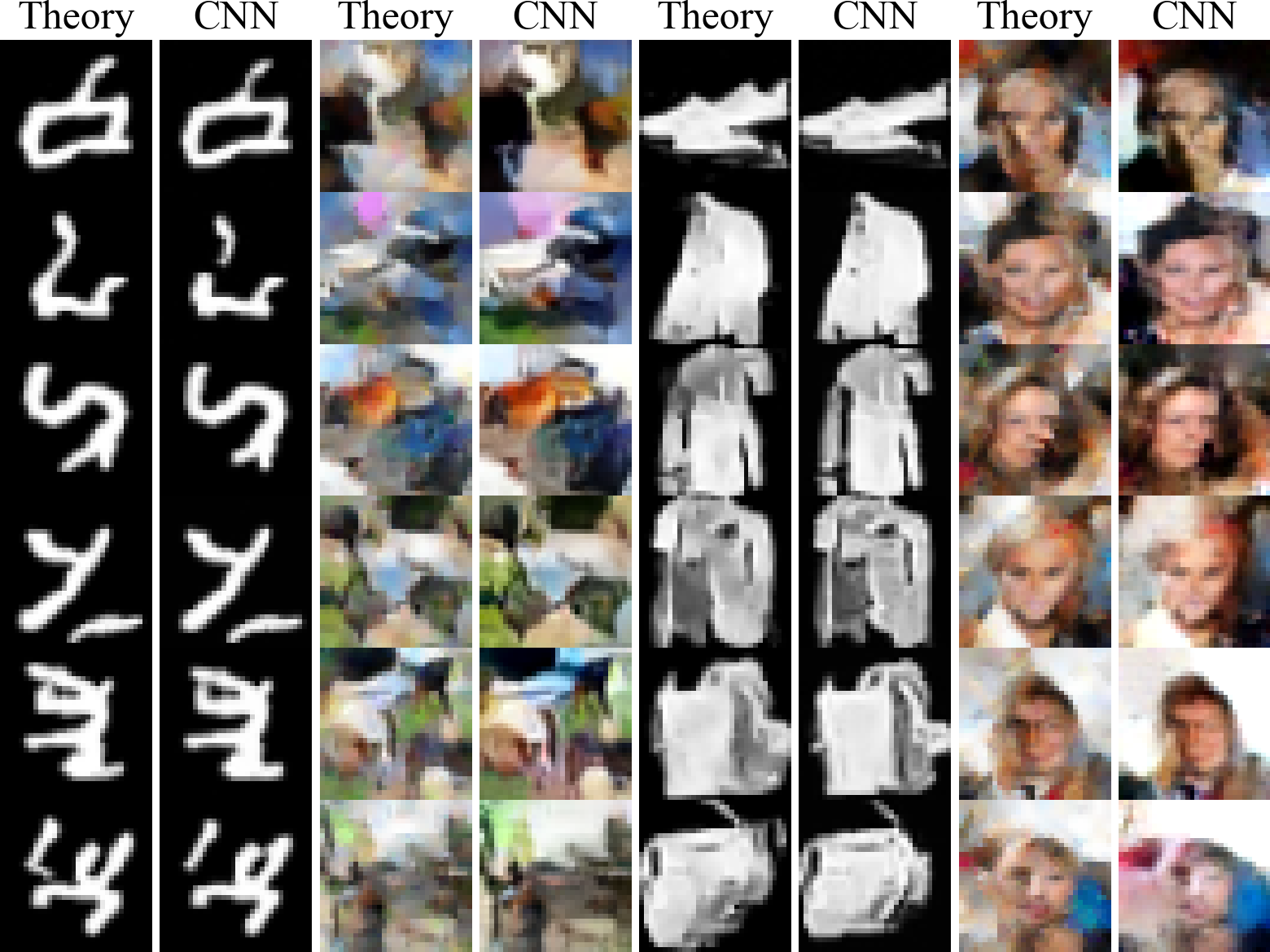}
    \caption{Our analytic theory (left columns) can accurately predict on a {\it case by case basis} the outputs of convolutional diffusion models (right columns),  with UNet or ResNet architectures trained on MNIST, CIFAR10, FashionMNIST, and CelebA (left to right), even when these outputs are highly original and far from the training data. See Fig.~\ref{fig:enter-label}, App.~\ref{app:empirics}, Fig.~\ref{fig:a1} and Table~\ref{tab:correlation_results}, and App.~\ref{appendix:samples}, Fig.~\ref{fig:resnet-mnist-zeros} to Fig.~\ref{fig:cifar10-zeros-attention} for many more successful theory-experiment comparisons.}
    \label{fig:figure1}
\end{figure}

A deep puzzle of generative AI lies in understanding how it produces apparently creative output: outputs that possess a meaningful relationship to their training data, but are nonetheless clearly original, exhibiting novel combinations of attributes that are represented across disparate training examples.  What is the {nature and }origin of this creativity, and how precisely is it generated from a finite training set? We answer these questions for small convolutional diffusion models of images by deriving an analytic and interpretable theory of their behavior that can accurately predict their outputs on a {\it case-by-case basis} (Fig.~\ref{fig:figure1}), and explain how they are created out of {\it locally consistent patch mosaics} of the training data. 

Denoising probabilistic diffusion models (DDPMs) were established in  \citet{sohl2015deep} and \citet{ho2020denoising}, and then unified with score-matching \cite{song2019generative,song2020score}. Denoising diffusion implicit models (DDIMs), an alternative deterministic parameterization which we primarily use in this paper, were established in \citet{song2020denoising}.  Diffusion models now play an important role not only in image generation 
\cite{dhariwal2021diffusion,rombach2022high, ramesh2022hierarchical}, but also video generation \cite{ho2022imagen, ho2022video, blattmann2023stable}, drug design \cite{alakhdar2024diffusion}, protein folding \cite{watson2023novo}, and text generation \cite{li2023diffusion, li2022diffusion}.

These models are trained to reverse a forward diffusion process that turns the finite training set distribution (a sum of $\delta$-functions over the training points) into an isotropic Gaussian noise distribution, through a time-dependent family of mixtures of Gaussians centered at shrinking data points. Diffusion models are trained to reverse this process by learning and following a score function that points in gradient directions of increasing probability. But therein lies the puzzle of creativity in diffusion models:  if the network can learn this {\it ideal} score function exactly, then they will implement a perfect reversal of the forward process. This, in turn, will {\it only} be able to turn Gaussian noise into memorized training examples. Thus, any originality in the outputs of diffusion models {\it must} lie in their {\it failure} to achieve the very objective they are trained on: learning the ideal score function.  But how can they fail in intelligent ways that lead to many sensible new examples {\it far} from the training set?    

Several theoretical and empirical works study the properties of diffusion models. Some works study the sampling properties of these models under the assumption that they learn the ideal score function exactly for a solvable toy class of distributions \cite{biroli2024dynamical, de2022convergence, wang2023diffusion} or up to some small bounded error \cite{benton2024nearly}. Others establish accuracy guarantees on learning the ideal score function under various assumptions on the data distribution, and the hypothesis class of functions \cite{lee2022convergence,chen2023score,oko2023diffusion,ventura2024manifolds,Cui2023-mr,Cui2023-am}.

As noted above, a key limitation of studying diffusion models under the assumption that they (almost) learn the ideal score function is that such models can only generate memorized training examples; {while memorizing behavior has been observed in trained diffusion models} \cite{gu2023memorization,somepalli2023diffusion}, {the ideal score function predicts the model will \textit{always} produce memorized examples,} at odds with the creativity of diffusion models in practice. For example, they can compose aspects of their training data in combinatorially many novel ways \cite{sclocchi2024phase,okawa2024compositional}. This observation has motivated studies of mechanisms behind generalization in diffusion models that underfit the score-matching objective \cite{kadkhodaie2023generalization,zhang2023emergence,wang2024diffusion,scarvelis2023closed}. Other works connect creativity in diffusion models to the breakdown of memorization in modern Hopfield networks \cite{ambrogioni2023search,hoover2023memory,pham2024memorization}. However, the extent to which these works can quantitatively predict individual samples from a trained diffusion model on a case-by-case basis is more limited.  

To develop theory beyond the memorization regime, we focus on diffusion models with a fully-convolutional backbone, without the self-attention layers introduced in \citet{ho2020denoising}.  We identify two fundamental inductive biases that prevent such models from learning the ideal score-function:  \textit{translational equivariance}, due to parameter sharing in convolutional layers, and \textit{locality}, due to the model's finite receptive field size. Remarkably, we show these two simple biases are {\it sufficient} to quantitatively explain the creative outputs of convolutional diffusion models, after calibrating a single time-dependent hyperparameter (the locality scale).

Relatedly, \citet{kadkhodaie2023learning} also identified locality as a limiting constraint in CNN-based diffusion models, but did not attempt to predict their specific outputs. Concurrently with our work, \citet{niedoba2024towards} developed a (non-equivariant) patch-based local score approximation model of diffusion models similar to ours, although their quantitative success in predicting the outputs of the neural networks they studied was more limited since they did not study CNNs, which have the strongest locality biases. Another concurrent work \citep{wangunreasonable} also studied a Gaussian mixture-based approximation to the reverse process to predict samples on a case-by-case basis. Finally, the results of our analysis exhibit some similarity to very early patch-based texture synthesis methods, e.g. \citet{efros1999texture}. Our contributions and outline are as follows:

\begin{enumerate}[itemsep=-1pt]
    \item We review why diffusion models that learn the ideal score function can only memorize (Sec.~\ref{sec:ideal_score}).
    \item We derive minimum mean squared error (MMSE) approximations to the ideal score function subject to locality, equivariance, and/or partially broken equivariance due to image boundaries.   Remarkably, we find simple analytic solutions in all cases (Sec.~\ref{sec:equivlocal}.)
    \item These solutions lead to a local score (LS) machine and a boundary-broken equivariant local score (ELS) machine, which constitute fully analytic, mechanistically interpretable theories that can transform noise into creative, structured images without the need for any explicit training process. (Sec.~\ref{sec:equivlocal}).
    \item We theoretically characterize samples generated by the ELS machine and show how it achieves {\it exponential} creativity through {\it locally consistent patch mosaics} composed of different local training set image patches at different locations in each novel sample (Sec.~\ref{sec:creativitytheory}). 
    \item We show our boundary-broken ELS machine is not only analytic and interpretable but also {\it predictive}: it can predict, on a case-by-case basis, the outputs of trained UNets and ResNets, achieving median theory-experiment agreements of $r^2 \sim 0.94, 0.95, 0.94, 0.96$ on MNIST, FashionMNIST, CIFAR10, and CelebA for the best architecture on each dataset (Sec.~\ref{sec:theoryexp}). We show that on CelebA32x32, ResNets are best predicted by the ELS machine (median $r^2 \sim 0.96$), but UNet behavior is better predicted by the fully-local LS machine (median $r^2 \sim 0.90$). 
    \item Our comparison between theory and experiment reveals that trained diffusion models exhibit a coarse-to-fine generation of spatial structure over time and use image boundaries to anchor image generation (Sec.~\ref{sec:theoryexp}).
    \item Our theory reproduces the notorious behavior of diffusion models to generate spatially inconsistent images at fine spatial scales (e.g. incorrect numbers of limbs) and explains its origin in terms of excessive locality at late times in the reverse generative process. (Sec.~\ref{sec:theoryexp}).
    \item We compare our purely local ELS machine theory to more powerful trained UNet architectures with non-local self-attention (SA) layers. Our local theory can still partially predict their non-local outputs (median $r^2$ of 0.77 on CIFAR10), but reveal an interesting role for attention in carving out semantically coherent objects from the ELS machine's local patch mosaics (Sec.~\ref{sec:attention}). 
\end{enumerate}

Overall our work illuminates the mechanism of creativity in convolutional diffusion models and forms a foundation for studying more powerful attention-enabled counterparts.

\section{The ideal score machine only memorizes}
\label{sec:ideal_score}

We first discuss why any diffusion model that learns the ideal score function on a finite dataset can only memorize.   

The key idea behind diffusion models is to reverse a stochastic forward diffusion process that iteratively converts the data distribution $\pi_0(\phi)$, where $\phi \in \mathbb R^N$ is any data point, into a sequence of distributions $\pi_t(\phi)$ over time $t$, such that the final distribution $\pi_T(\phi)$ at time $T$ is an isotropic Gaussian $\mathcal{N}(0,I)$. The forward diffusion process usually shrinks the data points toward the origin while adding Gaussian noise, so that when conditioning on any {\it individual} data point $\varphi \sim \pi_0$, the conditional probability $\pi_t(\phi | \varphi)$ becomes the Gaussian $\mathcal{N}(\phi| \sqrt{\bar{\alpha}_t} \varphi, (1 - \bar{\alpha}_t ) I)$. The noise schedule $\bar{\alpha}_t$ decreases from $1$ at $t=0$ to $0$ at $t=T$ so that
the mean $\sqrt{\bar{\alpha}_t} \varphi$ of $\pi_t(\phi | \varphi)$ shrinks over time, and its variance increases, until $\pi_t(\phi | \varphi) \sim \mathcal{N}(0,I)$ for all initial points $\varphi$ (see figure \ref{fig:notation}).   

A simple time reversal of this forward process can be obtained by sampling $\phi_T \sim \mathcal{N}(0,I)$ and then flowing it backwards in time from $T$ to $0$ under the deterministic flow
\begin{equation}
    -\dot{\phi} _t= \gamma_t(\phi_t + s_t(\phi_t)),
    \label{eq:reverseflow}
\end{equation}
where $s_t(\phi) \equiv \nabla_\phi \log \pi_t(\phi)$ is the {\it score function} of the distribution $\pi_t(\phi)$ under the forward process and $\gamma_t$ depends on the entire noise schedule $\bar{\alpha_t}$ (see App.~\ref{app:math} for details).  The flow in \eqref{eq:reverseflow} induces a sequence of reverse distributions $\pi^R_t(\phi)$ that exactly reverse the forward process in the sense that $\pi^R_t(\phi) = \pi_t(\phi)$ for all $t\in[0,T]$. Intuitively, this reversal occurs because, for any finite dataset $\mathcal D$, $\pi_t(\phi)$ is a mixture of Gaussians centered at shrunken data points, 
\begin{align}
    \pi_t(\phi) &= \frac{1}{|\mathcal{D}|} \sum_{\varphi \in \mathcal{D}} \mathcal{N}(\phi|\sqrt{\bar{\alpha}_t} \varphi, (1 - \bar{\alpha}_t)I),
    \label{eq:gaussmix}
\end{align}
and the score $s_t(\phi)$ points uphill on this mixture.  Thus the second term in \eqref{eq:reverseflow} flows $\phi_t$, as $t$ decreases, towards shrunken data points, and the first term undoes the shrinking.  

Motivated by this theory, score-based diffusion models attempt to sample the data distribution $\pi_0(\phi)$ by forming an estimate $\hat s_t(\phi)$ of the score function $s_t(\phi)$, and then plugging this estimate and initial noise $\phi_T \sim \mathcal N(0,I)$ into the reverse flow in \eqref{eq:reverseflow} to obtain a sample $\phi_0$.  We consider what happens when the estimate matches the ideal score function so $\hat s_t(\phi)= s_t(\phi)$ on any finite dataset $\mathcal D$.  Then the score of the Gaussian mixture $\pi_t(\phi)$ in \eqref{eq:gaussmix}, is (App.~\ref{app:math}):
\begin{align}\label{eq:optimal_discrete_score}
    s_t(\phi) &= \frac{1}{1 - \bar{\alpha}_t}\sum_{\varphi \in \mathcal{D}} (\sqrt{\bar{\alpha}_t} \varphi - \phi) W_t(\varphi|\phi), \\
    \label{eq:posteriorprob}
    W_t(\varphi|\phi) &= \frac{\mathcal{N}(\phi| \sqrt{\bar{\alpha}_t} \varphi, (1 - \bar{\alpha}_t ) I)}{\sum_{\varphi' \in \mathcal{D}} \mathcal{N}(\phi| \sqrt{\bar{\alpha}_t} \varphi', (1 - \bar{\alpha}_t ) I)} .
\end{align}
When $s_t$ in \eqref{eq:optimal_discrete_score} 
is inserted into \eqref{eq:reverseflow}, 
each term in \eqref{eq:optimal_discrete_score} acts as a force 
that pulls the sample $\phi$ towards 
a shrunken data point $\sqrt{\bar{\alpha}}_t\varphi$ as $t$ decreases, 
weighted by the posterior probability $W_t(\varphi|\phi)$ that $\phi$ at time $t$ would have originated from the datapoint $\varphi$ at time $0$ under the forward diffusion. %\eqref{eq:posteriorprob}.  

The combined reverse dynamics in \eqref{eq:reverseflow}, \eqref{eq:optimal_discrete_score} and \eqref{eq:posteriorprob}, which we call the {\it ideal score machine}, has an appealing Bayesian guessing game interpretation: the current sample $\phi$ at time $t$ optimally guesses which data point $\varphi$ it originated from in the {\it forward process}, thereby forming the posterior belief distribution $W_t(\varphi|\phi)$, and then flows to each (shrunken version) of the data points, weighted by this belief. 

Importantly, since the reverse flow provably reverses the forward diffusion, $\pi_0^R$ equals the empirical data distribution $\pi_0$, which is a sum of delta functions on the training set. Thus, \textit{the ideal score machine memorizes.} The mechanism behind memorization can be explained by positive feedback instabilities in the reverse flow.  In particular, the closer the sample $\phi$ is to a shrunken version of a data point $\varphi$, the higher the belief $W_t(\varphi|\phi)$ that $\phi$ originated from $\varphi$, and the stronger the force term  $(\sqrt{\bar{\alpha}_t} \varphi - \phi) W_t(\varphi|\phi)$ in \eqref{eq:optimal_discrete_score} pulling $\phi$ even closer to the shrunken $\varphi$, which in turn raises the belief $W_t(\varphi|\phi)$ at earlier $t$.  This positive feedback between belief and force causes the posterior belief distribution $W_t(\varphi|\phi)$ to rapidly concentrate onto a {\it single} data point $\varphi$, and so $\phi_t$ flows to this same point $\varphi$ under the reverse flow (Fig.\ref{fig:els-machine-structure} a.).  

Thus, any diffusion model that learns the true score $s_t$ on a finite dataset $\mathcal D$ {\it must} memorize the training data and {\it cannot} creatively generate new samples {\it far} from the training data.  While we have explained this memorization phenomenon intuitively using the ideal score machine, it has been well established in prior work (e.g. \citet{biroli2024dynamical}).

\section{Equivariant and local score machines}
\label{sec:equivlocal}

\begin{figure}[t]
\centering

\begin{subfigure}[t]{0.368\linewidth}
\captionsetup{justification=centering,singlelinecheck=off}
\includegraphics[width=\linewidth]{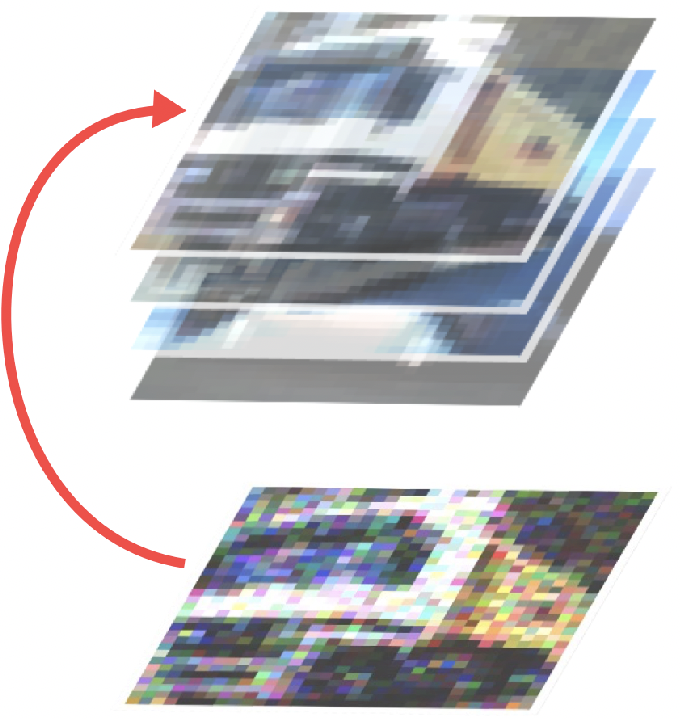}
\caption{IS Machine}
\label{fig:els-machine-structurea}
\end{subfigure} %\hspace{1mm} 
\begin{subfigure}[t]{0.2875\linewidth} 
\captionsetup{justification=centering,singlelinecheck=off}
\includegraphics[width=\linewidth]{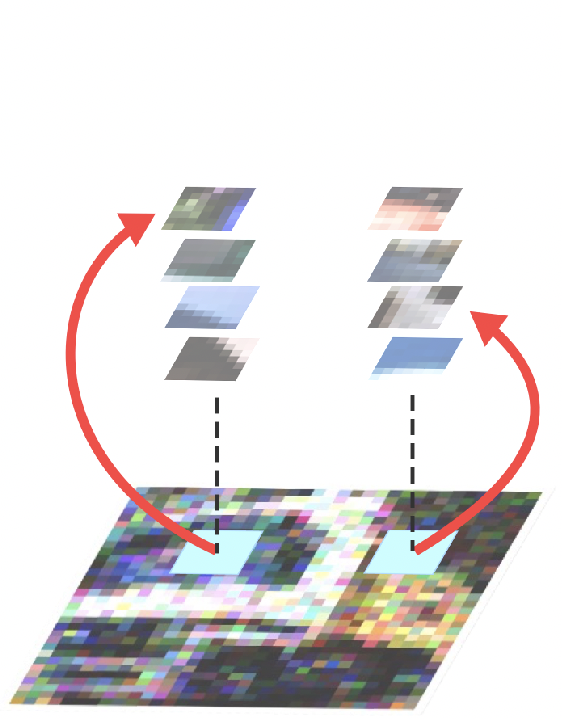}
\caption{LS Machine}
\label{fig:els-machine-structureb}
\end{subfigure} %\hspace{1mm} 
\begin{subfigure}[t]{0.2875\linewidth}
\captionsetup{justification=centering,singlelinecheck=off}
\includegraphics[width=\linewidth]{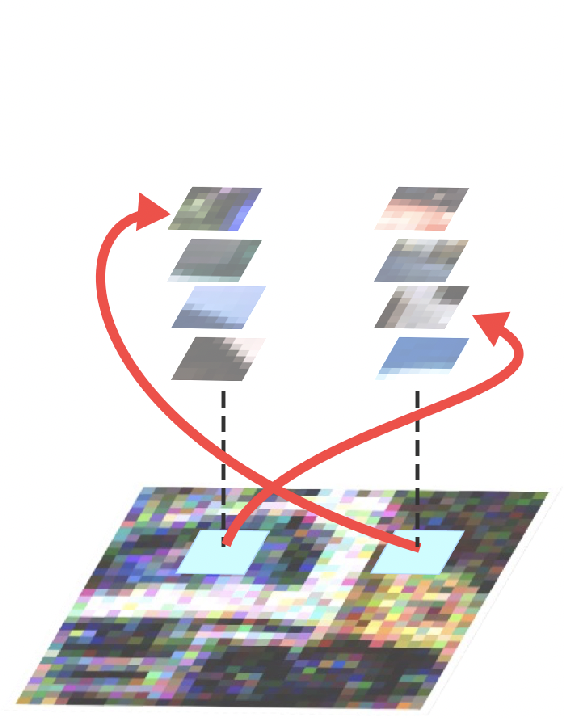}
\caption{ELS Machine}
\label{fig:els-machine-structurec}
\end{subfigure}

    \caption{Ideal score-matching under various constraints. (a) In the IS machine, the {\it entire} image (bottom) reverse flows to a {\it single} training set image from the training set (top stack). (b,c) In both the LS and ELS machines, different local patches of the image flow to different local patches in the training set. In the LS machine this final training patch must be drawn from the {\it same} location (b), while in the ELS machine, it can be drawn from {\it any} location (c).}
    \label{fig:els-machine-structure}
\end{figure}

The failure of creativity in the ideal score machine means that it {\it cannot} be a good model of what realistic diffusion models do beyond the memorization regime. We therefore seek simple inductive biases that {\it prevent} learning the ideal score function $s_t$ in \eqref{eq:optimal_discrete_score} on a finite dataset $\mathcal D$. By identifying these inductive biases, we hope to obtain a new theory of what diffusion models do when they creatively generate new samples far from the training data. 

The key observation is that many diffusion models use convolutional neural networks (CNNs) to form an estimate $\hat s(\phi)$ of the score function. Such CNNs have two prominent inductive biases. The first is translational equivariance due to weight sharing: translating the input image will correspondingly translate the CNN outputs. More generally, networks can be equivariant to arbitrary symmetry groups (e.g. \cite{cohen2016group},  \cite{hoogeboom2022equivariant}). The second is locality: since convolutional filters have narrow support, typical outputs of a CNN depend on their inputs only through a small receptive field of neighboring input pixels.   We therefore seek an optimal estimate $\hat s(\phi)$ of the ideal score in \eqref{eq:optimal_discrete_score} subject to locality and equivariance constraints.

We start with formal definitions of equivariance and locality. Let $M_t[\phi]$ denote a model score function that takes an input image $\phi$ and outputs an estimated score $\hat s_t(\phi) = M_t[\phi]$.
\begin{definition}
A model $M_t$ is defined to be $G$-equivariant with respect to the action of a group $G$ on data if for any $U \in G$, $M_t$ satisfies $M_t[U \phi] = U M_t[\phi]$.
\end{definition}
In our case of images, $G$ is the spatial translation group in two dimensions, $U\phi$ is a translated image, and $U M_t[\phi]$ is the translated score function. In other words, translating the input translates the outputs of an equivariant model in the same way. CNNs are translation equivariant if we impose periodic boundary conditions on the pixels, so that, for example, left translation of the leftmost pixels move them to the rightmost pixels (i.e. circular padding). However, the common practice of zero-padding images at their boundary breaks translation-equivariance; we extend our theory to this case in Sec.~\ref{sec:borders}.

We next turn to locality. For image data, let $x$ be a pixel location, $\phi(x) \in \mathbb{R}^C$ be the pixel value of image $\phi$ at location $x$ (where $C$ is the number of color channels) and let $M_t[\phi](x) \in \mathbb{R}^C$ denote the model score function evaluated at pixel location $x$, which informs how the pixel value $\phi(x)$ should move under the reverse flow.  Also, at each pixel location $x$, let $\Omega_x$ denote a local neighborhood of $x$ consisting of a subset of pixels near $x$, and let $\phi_{\Omega_x} \in \mathbb{R}^{|\Omega_x|\times C}$ be the restriction of pixel values of the entire image $\phi$ to the $|\Omega_x|$ pixels in the neighborhood $\Omega_x$. We define locality as:

% \vspace{-1em}
\begin{definition}
$M_t[\phi]$ is defined to be $\Omega$-local if, for all images $\phi$ and all pixel locations $x$, $M_t[\phi](x)$ depends on $\phi$ only through $\phi_{\Omega_x}$, i.e. $M_t[\phi](x) = M_t[\phi_{\Omega_x}](x)$. 
\end{definition}
% \vspace{-1em}
Thus if an $\Omega$-local model $M_t[\phi]$ is used in place of $s(t)$ in \eqref{eq:reverseflow}, the instantaneous reverse flow of any pixel value $\phi(x)$ at location $x$ and time $t$ will {\it not} depend on pixel values at any locations {\it outside} the local neighborhood $\Omega_x$; it depends {\it only} on the image {\it in} neighborhood $\Omega_x$.  In particular, two pixels at distant locations $x$ and $y$ with non-overlapping neighborhoods $\Omega_x$ and $\Omega_y$ will make completely independent decisions as to which directions to reverse flow; the portion of the image $\phi_{\Omega_y}$ in the neighborhood $\Omega_y$ of $y$, cannot instanteously affect the flow direction of the pixel value $\phi(x)$, and vice versa.

Next, we consider the optimal minimum mean squared error (MMSE) approximation to the ideal score function $s_t(\phi)$ in \eqref{eq:optimal_discrete_score} under locality and/or equivariance constraints. We provide full derivations in App.~\ref{app:proofs}, but the final answers, which we state below, are simple and intuitive.

\subsection{The equivariant score (ES) machine}

We first impose equivariance without locality. The MMSE equivariant approximation to $s(t)$ in \eqref{eq:optimal_discrete_score}-\eqref{eq:posteriorprob} is identical in form to the ideal score, except the dataset $\mathcal D$ is augmented to the orbit of $\mathcal D$ under the equivariance group $G$, which we denote by $G(\mathcal D)$.  For example, in our case of images, $G(\mathcal D)$ corresponds to all possible spatial translations of all images in $\mathcal D$. Explicitly, the MMSE equivariant score is given by (see App.~\ref{app:equivariance} for a proof)  
\begin{align}
    \label{eq:MMSE_equiv_score}
    M_t[\phi](x) &= \frac{1}{1 - \bar{\alpha}_t}\sum_{\varphi \in G(\mathcal{D})} ( \sqrt{\bar{\alpha}_t} \varphi(x) - \phi(x) ) W_t(\varphi|\phi)\\
    W_t(\varphi|\phi) &= \frac{\mathcal{N}(\phi|\sqrt{\bar{\alpha}_t} \varphi,(1 - \bar{\alpha}_t)I)}{\sum_{\varphi' \in G(\mathcal{D})} \mathcal{N}(\phi|\sqrt{\bar{\alpha}_t} \varphi',(1 - \bar{\alpha}_t)I)}.
\end{align}
Replacing the ideal score $s(t)$ in \eqref{eq:reverseflow} with \eqref{eq:MMSE_equiv_score} yields the equivariant score (ES) machine. While the ideal score machine memorizes the training data (see Sec.~\ref{sec:ideal_score}), the ES machine on images achieves only limited creativity: it can only generate any translate of any training image. 

\subsection{The local score (LS) machine}

We next impose locality without equivariance. The MMSE $\Omega$-local approximation to $s(t)$ in \eqref{eq:optimal_discrete_score}-\eqref{eq:posteriorprob} is given by
\begin{align}
    \label{eq:MMSE_local_score}
    M_t[\phi](x) &= \sum_{\varphi \in \mathcal{D}} \frac{( \sqrt{\bar{\alpha}_t} \varphi(x)- \phi(x))}{1 - \bar{\alpha}_t}W_t(\varphi_{\Omega_x}|\phi_{\Omega_x}),\\
    \label{eq:localposteriorprob}
    W_t(\varphi_{\Omega_x}|\phi_{\Omega_x}) &= \frac{\mathcal{N}(\phi_{\Omega_x}|\sqrt{\bar{\alpha}_t} \varphi_{\Omega_x}, (1 - \bar{\alpha}_t)I)}{\sum_{\varphi'\in \mathcal{D}} \mathcal{N}(\phi_{\Omega_x}|{\sqrt{\alpha}_t} \varphi_{\Omega_x}', (1 - \bar{\alpha}_t)I)}.
\end{align}
Each term in the local $M_t[\phi](x)$ in \eqref{eq:MMSE_local_score} is identical to each term in $s(t)$ in \eqref{eq:optimal_discrete_score}, yielding a force pulling the pixel value $\phi(x)$ towards a shrunken training set pixel value $\sqrt{\bar{\alpha}_t} \varphi(x)$ as before, {\it except} for the important change that the global posterior belief $W_t(\varphi|\phi)$ in \eqref{eq:optimal_discrete_score}-\eqref{eq:posteriorprob}, that is the same for {\it all} pixels $x$, is now replaced with a local $x$-dependent belief $W_t(\varphi_{\Omega_x}|\phi_{\Omega_x})$ in \eqref{eq:MMSE_local_score}-\eqref{eq:localposteriorprob}. $W_t(\varphi_{\Omega_x}|\phi_{\Omega_x})$ is the posterior probability that a sample image $\phi$ under the forward process at time $t$ originated from a training image $\varphi$ at time $0$, conditioned on the only information the model $M_t[\phi](x)$ can depend on, namely the restriction $\phi_{\Omega_x}$ of the image $\phi$ to the local neighborhood $\Omega_x$ at location $x$. The closer the local image patch $\phi_{\Omega_x}$ is to the co-located training image patch $\varphi_{\Omega_x}$, the larger the posterior $W_t(\varphi_{\Omega_x}|\phi_{\Omega_x})$ in \eqref{eq:localposteriorprob}.  

Replacing the ideal score $s(t)$ in \eqref{eq:reverseflow} with \eqref{eq:MMSE_local_score} yields the local score (LS) machine. The LS machine can achieve significant combinatorial creativity by allowing local image neighborhoods $\phi_{\Omega_x}$ and $\phi_{\Omega_{x'}}$ of different pixels $x$ and $x'$ to reverse flow close to training image patches $\varphi_{\Omega_x}$ and $\varphi'_{\Omega_{x'}}$ from {\it different} training images $\varphi$ and $\varphi'$ (Fig.\ref{fig:els-machine-structure}b).  Indeed the same positive feedback between belief and force that holds for the IS machine at a global level (Sec.~\ref{sec:ideal_score}), also holds for the LS machine at a local level, causing the posterior beliefs $W_t(\varphi|\phi_{\Omega_x})$ of all pixels $x$ to concentrate on a unique training image, but this training image could be different for different far away pixels. This flow decoupling of local image patches in $\phi_t$ empowers exponential creativity.  

However, an important limitation remains in the LS machine: a local image patch $\phi_{\Omega_x}$ at pixel location $x$ {\it must} reverse flow close to some local training image patch $\varphi_{\Omega_x}$ drawn from the {\it same} location $x$; it cannot flow to a training image patch $\varphi_{\Omega_{x'}}$ drawn from a {\it different} location $x'$.  We next see that adding equivariance removes this limitation. 

\begin{figure*}[t]
    \centering
    
    \begin{subfigure}[b]{0.2\linewidth}
    \includegraphics[width=\linewidth,valign=b]{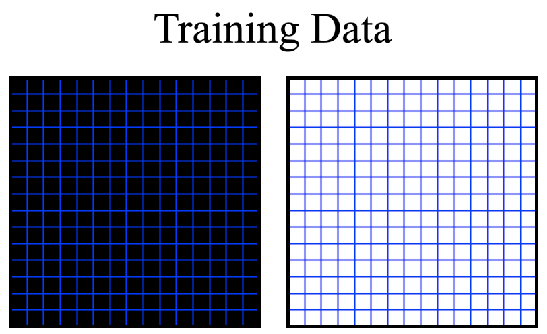}
    \label{fig:exceptional_pointsa}
    \vspace{-1em}
    \caption{}
    \end{subfigure}\hspace{8mm} 
    \begin{subfigure}[b]{0.42\linewidth}
    \includegraphics[width=\linewidth,valign=b]{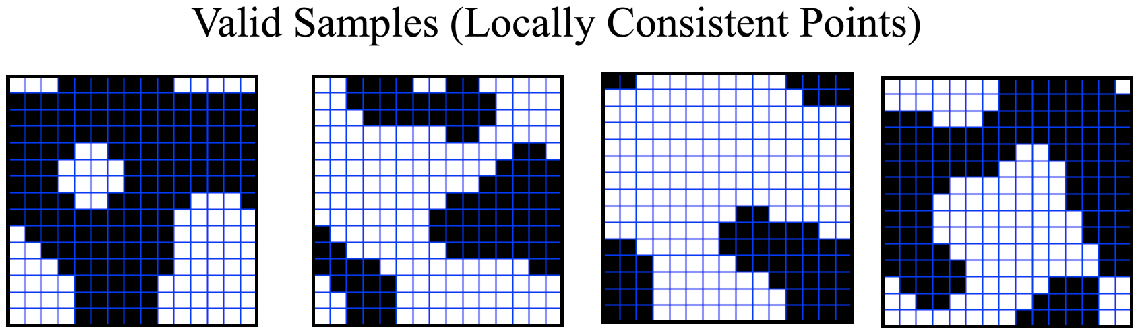}
    \label{fig:exceptional_pointsb}
    \vspace{-1em}
    \caption{}
    \end{subfigure}\hspace{8mm} 
    \begin{subfigure}[b]{0.2\linewidth}
    \includegraphics[width=\linewidth,valign=b]{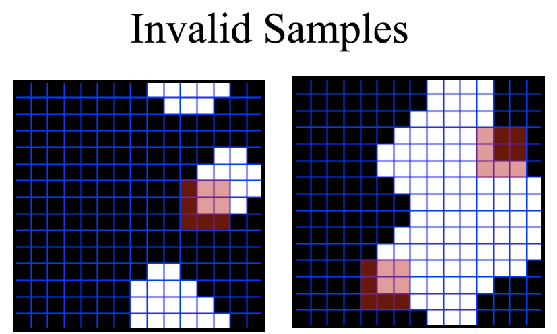}
    \caption{}
    \label{fig:exceptional_pointsc}
    \end{subfigure}

    \vspace{-1em}
    \caption{Exponential creativity through locally consistent patch mosaics. (a) A training set of two images (all black or all white). (b) Original samples from any local score machine (LS or ELS) with a $3\times3$ locality window and periodic boundary conditions. Local consistency in this special case means every generated pixel is either black or white, and the majority color of every generated $3\times3$ patch equals the color of its central pixel. (c) We note that samples are generated by numerically integrating the reverse flow in \eqref{eq:reverseflow}. If the step size in this integration is too large, one can generate invalid samples with a few cases of broken local consistency (highlighted red patches). In practice in trained diffusion models, this local consistency would only hold approximately.}
    \label{fig:exceptional_points} %\vspace{-1.5em}
\end{figure*}
\subsection{The equivariant local score (ELS) machine}

Further constraining the LS machine with equivariance leads to the ELS machine in which any local image patch at any pixel location $x$ can now flow towards any local training set image patch drawn from {\it any} location $x'$ not necessarily equal to $x$, as in the LS machine. This is the local analog of how the IS machine can only generate training set images, but the equivariance constrained ES machine can generate training set images globally translated to any other location.  

To formally express this result, assume all local neighborhoods $\Omega_x$ for different $x$ have the same shape $\Omega$.  For concreteness, one can think of $\Omega$ as a $P \times P$ square patch of pixels for $P$ odd, with $\Omega_x$ centered at location $x$.  Then let $P_{\Omega}(\mathcal{D})$ denote the set of all possible $\Omega$ shaped local training image patches drawn from any training image centered at any location.  An element $\varphi \in P_{\Omega}(\mathcal{D})$ now lives in $\mathbb{R}^{P \times P \times C}$ and denotes the pixel values of some local $\Omega$-shaped training image patch centered at some location. Now the optimal MMSE approximation to the ideal score in \eqref{eq:optimal_discrete_score}, under {\it both} equivariance and locality constraints is (App.~\ref{app:proofs}):     
\begin{align}
\label{eq:MMSE_equiv_translational}
    M_t[\phi](x) &= \sum_{\varphi \in P_\Omega(\mathcal{D})} \frac{(\sqrt{\bar{\alpha}_t} \varphi(0)-\phi(x))}{1 - \bar{\alpha}_t}W_t(\varphi|\phi, x)\\
    \label{eq:ELSposteriorprob}
    W_t(\varphi|\phi,x) &= \frac{\mathcal{N}(\phi_{\Omega_x}|\sqrt{\bar{\alpha}_t} \varphi, (1 - \bar{\alpha}_t)I)}{\sum_{\varphi' \in P_{\Omega}(\mathcal{D})}  \mathcal{N}(\phi_{\Omega_x}|\sqrt{\bar{\alpha}_t} \varphi', (1 - \bar{\alpha}_t) I)}.
\end{align}

We note that \eqref{eq:MMSE_equiv_translational}-\eqref{eq:ELSposteriorprob} for the ELS machine is identical to \eqref{eq:MMSE_local_score}-\eqref{eq:localposteriorprob} for the LS machine except that: (1) the sum over local training set patches in \eqref{eq:MMSE_equiv_translational}-\eqref{eq:ELSposteriorprob} in determining the flow $M_t[\phi](x)$ for pixel $\phi(x)$ is no longer restricted to training patches centered at the same location as $x$; and (2) each pixel $x$ must now track a larger posterior belief state $W_t(\varphi|\phi,x)$ in \eqref{eq:ELSposteriorprob} about which local training set patch at {\it any} location $x'$ was the origin of $\phi_{\Omega_x}$, as opposed to the smaller belief state $W_t(\varphi_{\Omega_x}|\phi_{\Omega_x})$ in \eqref{eq:localposteriorprob} about which local training set patch at the {\it same} location $x$ was the origin of $\phi_{\Omega_x}$.  In essence, in the Bayesian guessing game interpretation, equivariance removes each pixel's knowledge of its location $x$, so to guess the origin of its local image patch $\phi_{\Omega_x}$, it must guess both the training image {\it and} the location in the training image that it came from under the forward process. This guess then informs the reverse flow. Taken together, the ELS machine can creatively generate exponentially many novel images by mixing and matching local training set patches and placing them at any location in the generated image.  We call this a {\it patch mosaic model of creativity.}

\subsection{Breaking equivariance through boundaries}\label{sec:borders}

Due to the common practice of zero padding images at boundaries, CNNs actually break exact translational equivariance.  We can modify our ELS machine to handle this broken equivariance (see App. \ref{app:addingborders} for details).  The key idea is that breaking translation equivariance restores to each pixel some knowledge of its location within the image.  For example, if the local image patch $\phi_{\Omega_x}$ around pixel location $x$ contains many $0$ values, then the pixel can use these to infer its location with respect to the boundary, and use this knowledge in the Bayesian guessing game that determines the reverse flow. In essence, with additional conditioning about its relation to the boundary, $\phi_{\Omega_x}$ should {\it only} flow to training image patches that are consistent with the observed amount and location of zero-padding. For example, interior, edge, and corner image patches only flow to interior, edge and corner training image patches with the same boundary overlap (Fig.~\ref{fig:border-regions}).  This is a partial case of complete equivariance breaking in the LS machine, in which pixels know their exact location $x$, and the local image patch $\phi_{\Omega_x}$ only flows to training image patches at the {\it same} location $x$ (Fig.\ref{fig:els-machine-structure}b).

\section{A theory of creativity after convergence}
\label{sec:creativitytheory}

It is clear that the reverse flow from Gaussian noise $\phi_T$ to final sample $\phi_0$ in the ideal score machine converges to a single training set image.  But what do the LS, ELS or boundary broken ELS machines converge to at the end of the reverse process if they creatively generate novel samples {\it far} from the training data? We answer this question by proving a theorem that characterizes the converged samples $\phi=\phi_0$ at the end of the reverse process (App.~\ref{app:sample_dist}).

\begin{theorem} For the LS, ELS, and boundary broken ELS machines, assuming 
$\lim_{t \to 0} \phi_t$ 
and $\lim_{t \to 0} \partial_t\phi_t$ exist, 
then for every pixel $x$, $\phi_0(x) = \varphi(0)$ for the unique patch $\varphi \in P_{\Omega}^x(\mathcal{D})$ for which $\phi_{\Omega_x}$ is closer in $L_2$ distance (in 
$\mathbb{R}^{|\Omega_x|\times C}$)
than other local training set patch $\varphi' \in P_{\Omega}^x(\mathcal{D})$. 
\end{theorem}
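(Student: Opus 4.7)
The plan is to exploit the fact that, in all three machines, the posterior weights $W_t(\cdot)$ are softmaxes over squared patch distances with effective inverse temperature $1/(1-\bar{\alpha}_t)$, which diverges as $t \to 0$. I will first show that these softmaxes concentrate on the unique nearest training patch $\varphi^\star$, and then use the assumption that $\partial_t \phi_t$ has a finite limit to force $\phi_t(x)$ to track the central pixel value $\varphi^\star(0)$. The three machines differ only in which patch family $P_\Omega^x(\mathcal D)$ appears in the sum (co-located patches for LS, all translates for ELS, boundary-compatible translates for the boundary-broken ELS), so the argument is structurally the same in each case.

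First I would rewrite the score at pixel $x$ in the compact form
\begin{equation*}
    M_t[\phi_t](x) = \frac{\sqrt{\bar{\alpha}_t}\,\langle \varphi(0)\rangle_{W_t} - \phi_t(x)}{1-\bar{\alpha}_t},
\end{equation*}
where $\langle \varphi(0)\rangle_{W_t} = \sum_{\varphi \in P_\Omega^x(\mathcal D)} \varphi(0)\, W_t(\varphi|\phi_t, x)$ is the posterior mean of the central patch pixel, using $\sum_\varphi W_t = 1$. The weights are proportional to $\exp\bigl[-\|\phi_{t,\Omega_x} - \sqrt{\bar{\alpha}_t}\varphi\|^2/(2(1-\bar{\alpha}_t))\bigr]$. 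The hypothesis gives a unique $L_2$-nearest patch $\varphi^\star$ to $\phi_{0,\Omega_x}$; by continuity of $\phi_t$ in $t$ and finiteness of $P_\Omega^x(\mathcal D)$, there is a neighborhood of $t=0$ and a $\delta > 0$ on which $\|\phi_{t,\Omega_x} - \sqrt{\bar{\alpha}_t}\varphi^\star\|^2$ beats every competitor $\|\phi_{t,\Omega_x} - \sqrt{\bar{\alpha}_t}\varphi\|^2$ by at least $\delta$. Dividing log-weights by the vanishing $1-\bar{\alpha}_t$ then drives every non-maximal weight to $0$ exponentially, so $W_t(\varphi^\star|\phi_t,x) \to 1$ and $\langle \varphi(0)\rangle_{W_t} \to \varphi^\star(0)$.

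Next I would invoke the reverse flow $-\partial_t \phi_t(x) = \gamma_t\bigl(\phi_t(x) + M_t[\phi_t](x)\bigr)$. Under standard noise schedules $\gamma_t$ tends to a strictly positive finite constant at $t=0$, so the assumption that $\partial_t \phi_t$ has a finite limit forces $M_t[\phi_t](x)$ itself to remain bounded. Given the representation above and the fact that $1-\bar{\alpha}_t \to 0$, boundedness is possible only if the numerator $\sqrt{\bar{\alpha}_t}\langle \varphi(0)\rangle_{W_t} - \phi_t(x)$ vanishes. Passing to the limit, using $\sqrt{\bar{\alpha}_t} \to 1$ and the concentration just established, yields $\phi_0(x) = \varphi^\star(0)$, which is the claim.

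The main obstacle I anticipate is handling the softmax concentration cleanly, in particular verifying that the uniform gap $\delta$ survives the $\sqrt{\bar{\alpha}_t}$ rescaling as $t \to 0$; since the patch family is finite this reduces to a continuity-and-compactness argument, but it must be stated carefully so that the exponential suppression is uniform in $\varphi$. A secondary subtlety is reconciling the two regimes of the flow: the singular $1/(1-\bar{\alpha}_t)$ prefactor in $M_t$ and the requirement of a finite velocity. The only way these coexist is if $\phi_t(x)$ asymptotes to $\varphi^\star(0)$ at a rate at least as fast as $1-\bar{\alpha}_t$, which is exactly what the theorem asserts. Ties between nearest patches would break both the uniqueness clause and, typically, the existence of $\lim_{t\to 0}\partial_t\phi_t$, so the hypothesis already rules out these generic measure-zero exceptions.
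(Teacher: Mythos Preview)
Your proposal is correct and follows essentially the same route as the paper's proof: show the softmax weights concentrate on the unique $L_2$-nearest patch as $1-\bar\alpha_t\to 0$, then use the finiteness of $\partial_t\phi_t$ together with the divergent $1/(1-\bar\alpha_t)$ prefactor to force the numerator $\sqrt{\bar\alpha_t}\langle\varphi(0)\rangle_{W_t}-\phi_t(x)$ to vanish in the limit. One small technical point: your claim that ``$\gamma_t$ tends to a strictly positive finite constant at $t=0$'' is schedule-dependent and fails, for instance, for the pure cosine schedule $\bar\alpha_t=\cos^2(\pi t/2)$, where $\gamma_t\to 0$; in that case boundedness of $\partial_t\phi_t$ does not immediately bound $M_t$ itself. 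The paper avoids this by bounding the product $\gamma_t M_t$ directly and observing that the combined prefactor $\gamma_t/(1-\bar\alpha_t)=-\partial_t\bar\alpha_t/[2\bar\alpha_t(1-\bar\alpha_t)]$ still diverges like $t^{-1}$, which yields the same conclusion without any separate assumption on the limit of $\gamma_t$.
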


Intuitively, samples generated from these machines are {\it locally consistent} in the sense that they obeying $3$ local conditions: (1) every pixel $x$ can be uniquely assigned to a local training set patch $\varphi$; (2) the pixel value $\phi_0(x)$ is {\it exactly} equal to the central pixel $\varphi(0)$ of $\varphi$; (3) the rest of the local generated patch $\phi_{\Omega x}$ resembles the local training patch $\varphi$ more than any other possible training patch.  This result characterizes the creative outcome of locally constrained machines as creating {\it locally consistent} patch mosaics where every pixel of every local patch in the sample matches the central pixel of the $L_2$ closest local patch in the training set.   

\subsection{The simplest example of patch mosaic creativity}

As the simplest possible example illustrating the locally consistent patch mosaic model of creativity for the LS and ELS machines, consider a training set of {\it only} two images: an all black and an all white image (Fig.\ref{fig:exceptional_points}a).  A highly expressive diffusion model trained only on these two images would only generate these two images.  However, an LS or ELS machine with local $3\times 3$ neighborhoods generates exponentially many new samples that are locally consistent patch mosaics (Fig.\ref{fig:exceptional_points}b): every pixel is either black or white, indicating it is assigned to either an all black or all white $3\times 3$ local training set patch.  And any $3\times3$ local patch of a generated sample with a central black (white) pixel is closer to the all black (white) training set patch than the other training set patch.  Thus local consistency in this special case reduces to the simple condition that the majority color of any $3\times3$ locally generated patch must equal the color of its central pixel.   The reader can check that this local consistency holds (with appropriate circular wraparound) at every pixel in Fig.\ref{fig:exceptional_points}b.

\section{Tests of the theory on trained models}
\label{sec:theoryexp}

\begin{figure*}
    \centering
    \begin{subfigure}[b]{0.39\linewidth}
    % \captionsetup{justification=centering}
    \includegraphics[width=\linewidth,valign=b]{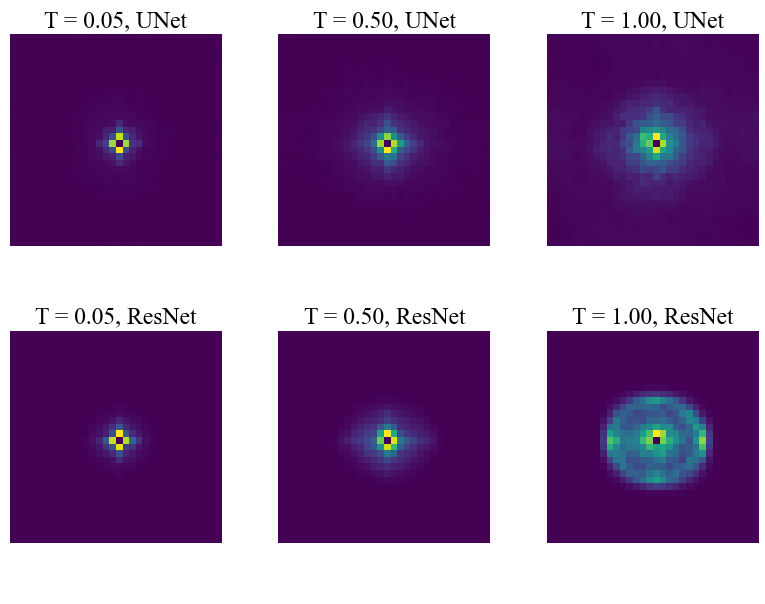}
    \label{fig:scales_jacobiana}
    \vspace{-1em}
    \caption{}
    \end{subfigure}\hspace{10mm} 
    \begin{subfigure}[b]{0.17\linewidth}
    % \captionsetup{justification=centering}
    \includegraphics[width=\linewidth,valign=b]{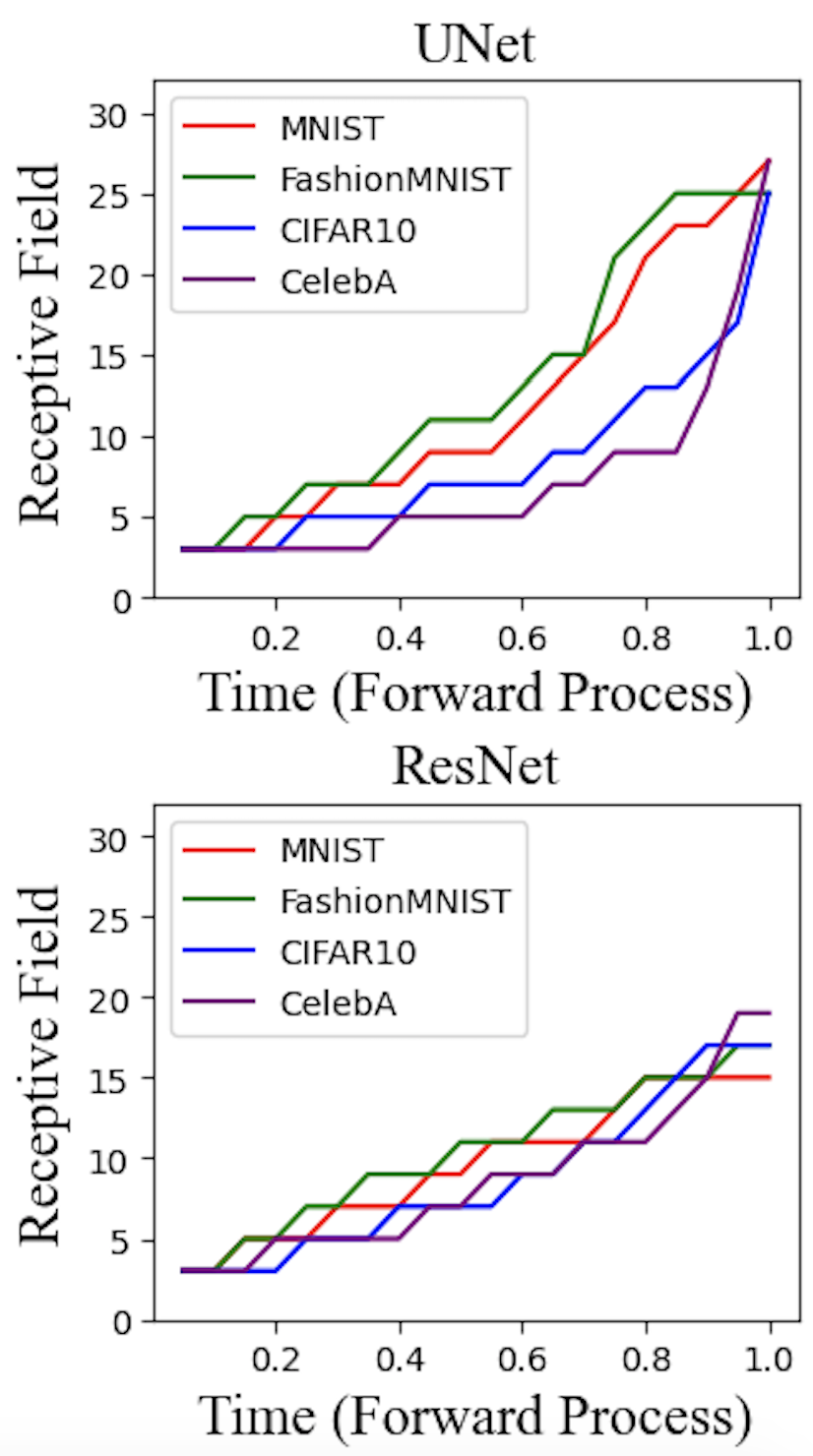}
    \label{fig:scales_jacobianb}
    \vspace{-1em}
    \caption{}
    \end{subfigure}\hspace{10mm} 
    \begin{subfigure}[b]{0.16\linewidth}
    % \captionsetup{justification=centering}
    \includegraphics[width=\linewidth,valign=b]{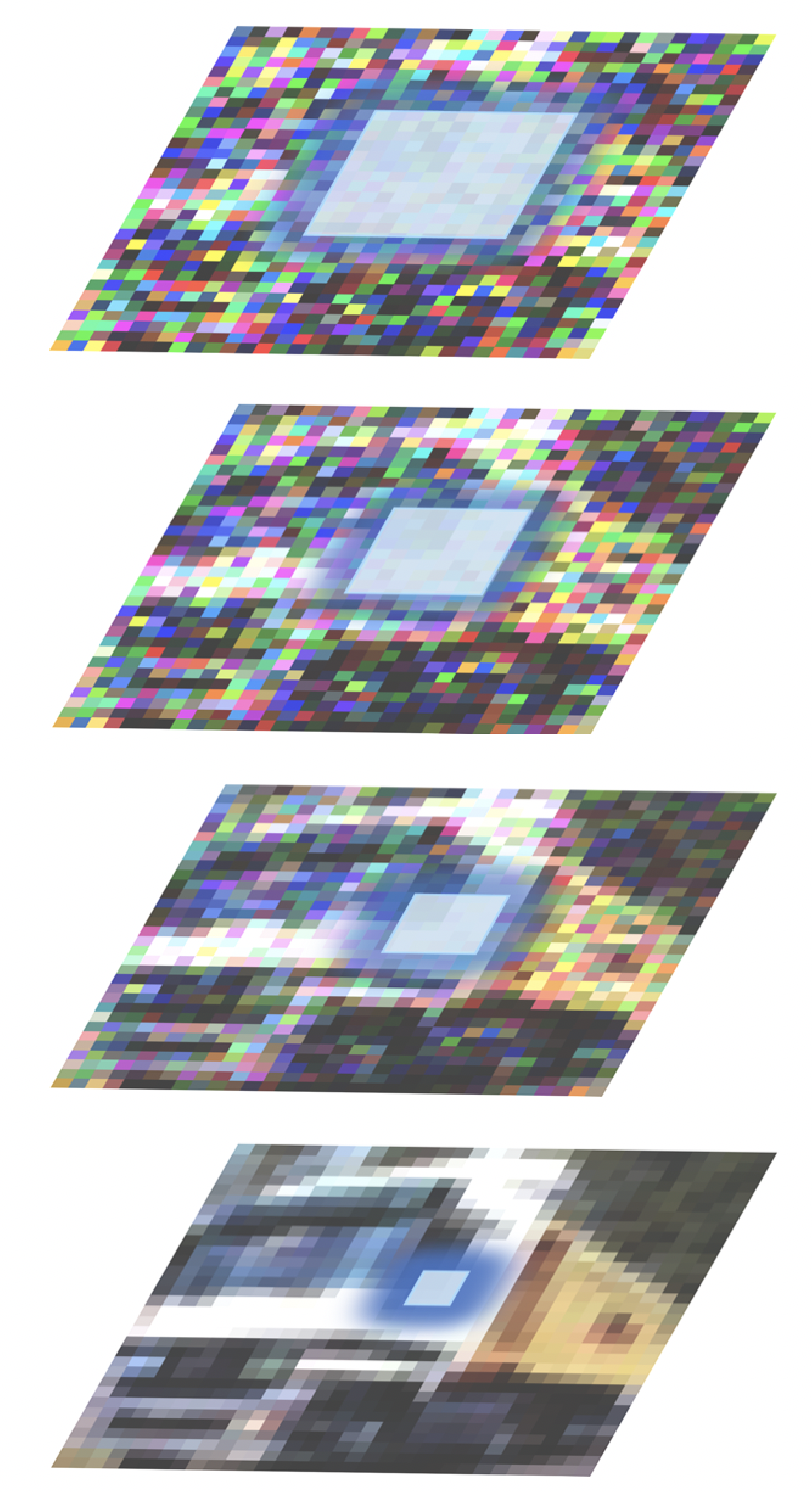}
    \vspace{-1em}
    \caption{}
    \label{fig:scales_jacobianc}
    \end{subfigure}
    \vspace{-1em}
    \caption{Coarse to fine progression of spatial locality in the reverse flow. (a) A heatmap of the average absolute value of the Jacobian from the output score $M_t[\phi_t](x=0)$ at the center pixel $x=0$ back to all input pixels $\phi(x')$ as a function of $x'$.  This receptive field shrinks from large to small as time progresses from early (large $t$) to late (small $t$) in the reverse flow. (b) Optimally calibrated values of the spatial locality scale $P$ of the (E)LS machine as a function of time $t$ (see App.~\ref{sec:multiscale} for details of calibration). (c) A schematic view of the time-dependent LS and ELS machines in which the locality neighborhood shrinks as the reverse time flows from top to bottom.}
    \label{fig:scales_jacobian} \vspace{-1em}
\end{figure*}

We next test our theory on two CNN-based architectures, a standard UNet \cite{ronneberger2015u} and a ResNet \cite{he2016deep} trained on $4$ datasets, MNIST, FashionMNIST, CIFAR10, and CelebA (see App.~\ref{sec:exp_details} for details of architectures and training).  We restrict our attention to these simple datasets because our theory is for CNN-based diffusion models only, and more complex diffusion models with attention and latent spaces are required to model more complex datasets.

\subsection{Coarse-to-fine time dependent spatial locality scales}\label{sec:tdepfield}

To compare our theory of ELS and LS machines with experiments, we must first choose a locality scale for the size of the $P\times P$ local patch.  We measure it in the trained UNet and ResNet and find, importantly, that it changes from large  to small scales as time passes from early (large $t$) to late (small $t$) in the reverse flow (Fig. \ref{fig:scales_jacobian}a).  We therefore promote the spatial size of the $P \times P$ locality window in our ELS and LS machines to a dynamic variable which we calibrate to the UNet and ResNet (Fig. \ref{fig:scales_jacobian}bc).  See App.~\ref{sec:multiscale}.  

\subsection{Theory predicts trained outputs case-by-case}\label{sec:tdepfield}

We first compare the outputs of the scale-calibrated boundary broken-ELS machine to the outputs of the ResNet and the UNet on a case-by-case basis for the same initial noise samples $\phi_T$ to both the theory and the ResNet or UNet, and we find an excellent match  (Fig.~\ref{fig:enter-label}ab). Indeed we find  \textbf{a remarkable and uniform \textit{quantitative} agreement between the CNN outputs and ELS machine outputs.} For ResNets, we find median $r^2$ values between theory and experiment of 0.94 on MNIST, 0.90 on FashionMNIST, 0.90 on CIFAR10, and 0.96 on CelebA32x32. For UNets, we find median $r^2$ values of 0.89 on MNIST, 0.93 on FashionMNIST, and 0.90 on CIFAR10 (see Fig.~\ref{fig:a1} for the full distribution of $r^2$ values). We find, unlike on other datasets, that the UNet behavior is more accurately described by the \textit{local score machine} rather than the ELS machine on CelebA32x32, the former achieving median $r^2 \sim 0.90$; we describe this observation in more detail in section \ref{sec:unets_positional}. To our knowledge, this is {\it the first time} an analytic theory has explained the creative outputs of a trained deep neural network-based generative model to this level of accuracy. Importantly, the (E)LS machine explains all trained outputs far better than the IS machine (Fig.~\ref{fig:a1} and Table~\ref{tab:correlation_results}). See App.~\ref{appendix:samples}, Fig.~\ref{fig:resnet-mnist-zeros} to Fig.~\ref{fig:cifar10-circular-resnet} for many more successful case-by-case theory-experiment comparisons for the $2$ nets and $3$ datasets.

% \subsection{Boundary driven anchoring of diffusion models}

We also trained circularly padded ResNets on MNIST and CIFAR10, and found a good match between the {\it non}-boundary broken ELS machine and experiment (Figs.~\ref{fig:a2}, \ref{fig:mnist-circular-resnet} and \ref{fig:cifar10-circular-resnet}). Interestingly, in both theory and experiment for MNIST, circular padding yields more texture-like outputs and less localized digit-like outputs, indicating the fundamental importance of boundaries in anchoring diffusion models, for MNIST at least (compare Fig.~\ref{fig:mnist-circular-resnet} and Fig.~\ref{fig:resnet-mnist-zeros}).

\begin{figure*}[t]
    \centering
    \begin{subfigure}[t]{0.3325\textwidth}%0.35
    \includegraphics[width=\linewidth,valign=t]{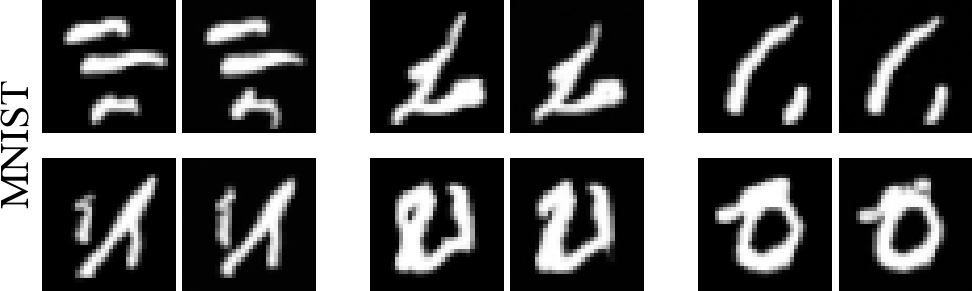} \\\\\\
    \includegraphics[width=\linewidth]{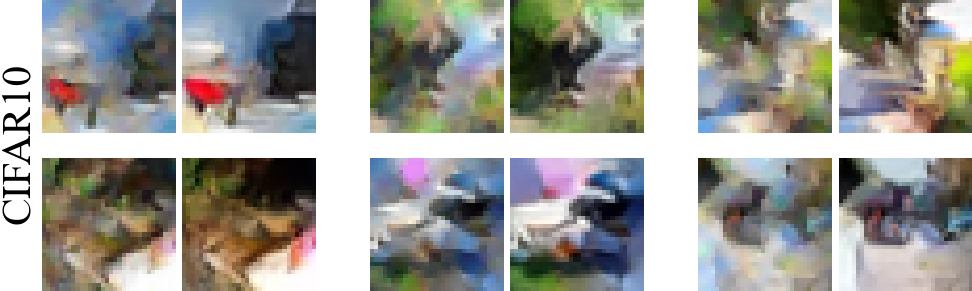}\\\\
    \includegraphics[width=\linewidth]{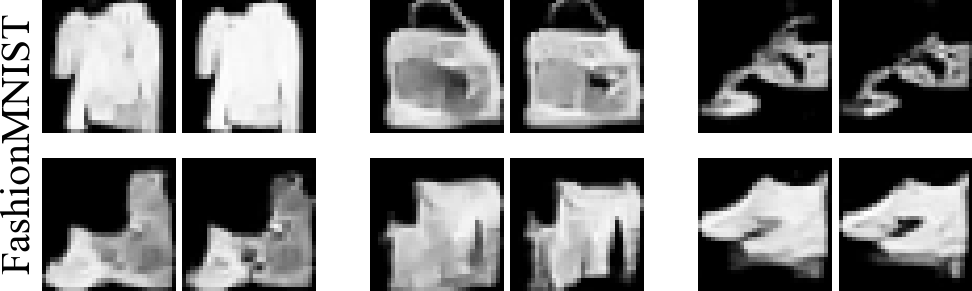}
    \\\\
    \includegraphics[width=\linewidth]{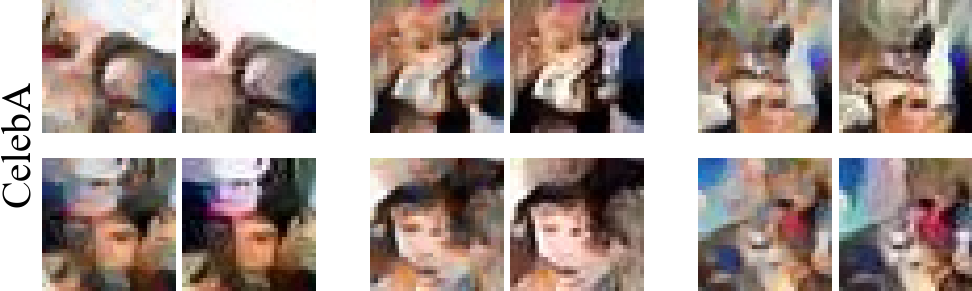}

    \caption{Theory (left) vs. ResNet (right)}
    \end{subfigure}%~
        \hspace{6mm}
    \begin{subfigure}[t]{0.31825\textwidth}    %0.335
    \includegraphics[width=\linewidth,valign=t]{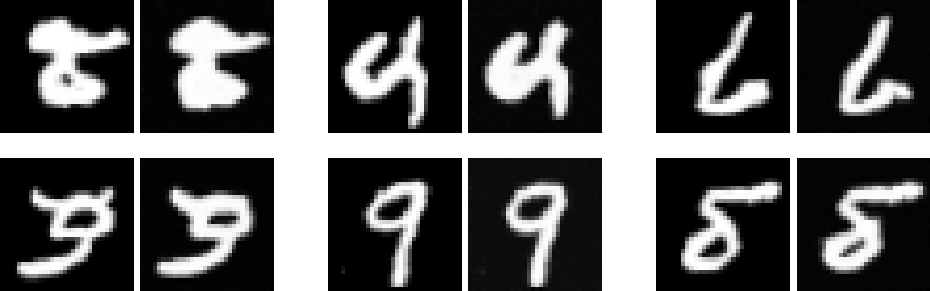}\\\\\\
    \includegraphics[width=\linewidth]{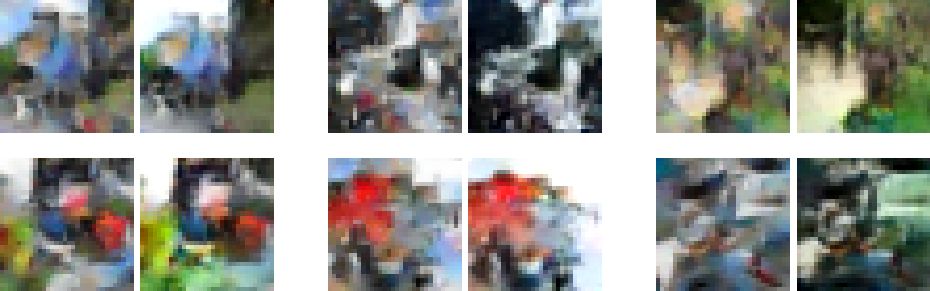}\\\\
    \includegraphics[width=\linewidth]
    {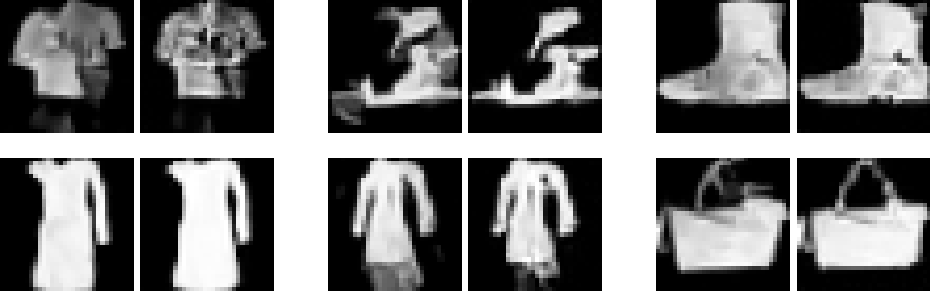}\\\\
    \includegraphics[width=\linewidth]{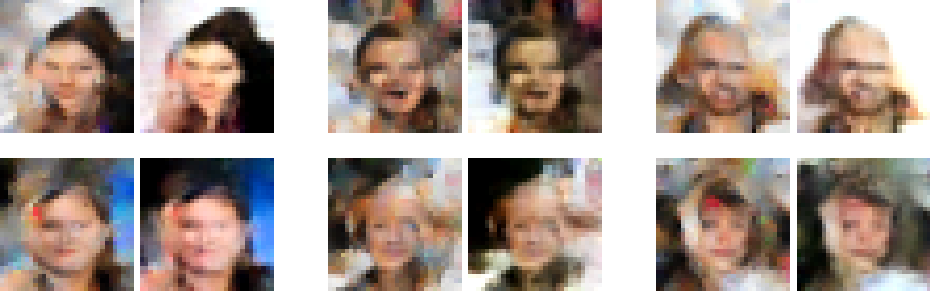}
    \caption{Theory (left) vs. UNet (right)}
    \end{subfigure}%~
        \hspace{10mm}
    \begin{subfigure}[t]{0.15\textwidth}
    \includegraphics[height=8.15cm,valign=t]{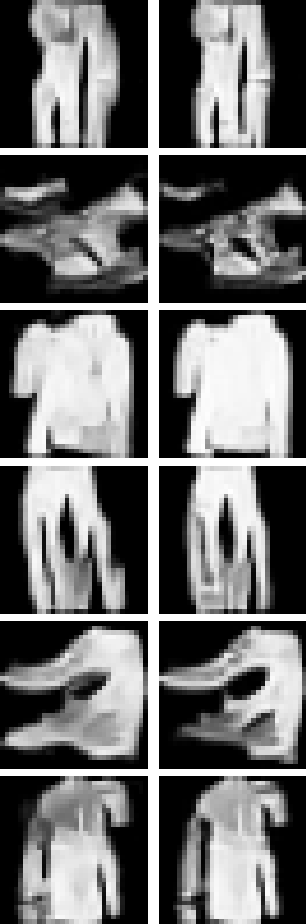} % 8.37
    \caption{Inconsistencies in FashionMNIST}
    \label{fig:inconsistencies}
    \end{subfigure} 
    % \vspace{1em}
    \caption{Match between theory and experiment. (a,b) Each pair of images shows a striking match between the output of the boundary broken ELS machine (left image in each pair) and the output of a trained CNN diffusion model (right image in each pair) when both models are given the same initial noise input.  We compare theory with $2$ architectures (ResNet in (a), and UNet in (b)) on $3$ datasets (MNIST, CIFAR10 and FashionMNIST from top to bottom). See App.~\ref{appendix:samples}, Fig.~\ref{fig:resnet-mnist-zeros} to Fig.~\ref{fig:cifar10-circular-resnet} for many comparisons and Fig.~\ref{fig:a1} and Table~\ref{tab:correlation_results} for quantitative $r^2$ values indicating high match between theory and experiment. (c) Trained CNN diffusion models (right) produce well-known spatial inconsistencies (e.g. 3 legged pants (row 1,4), 3 armed tops (row 3,6), bifurcated shoes (row 2,5)). Remarkably, the ELS theory (left) predicts this behavior and mechanistically explains it through excessive spatial locality at late times in the reverse flow.} \vspace{-1em}
    \label{fig:enter-label}
\end{figure*}

\begin{figure*}[b]
    \centering
    \includegraphics[width=0.97\linewidth]{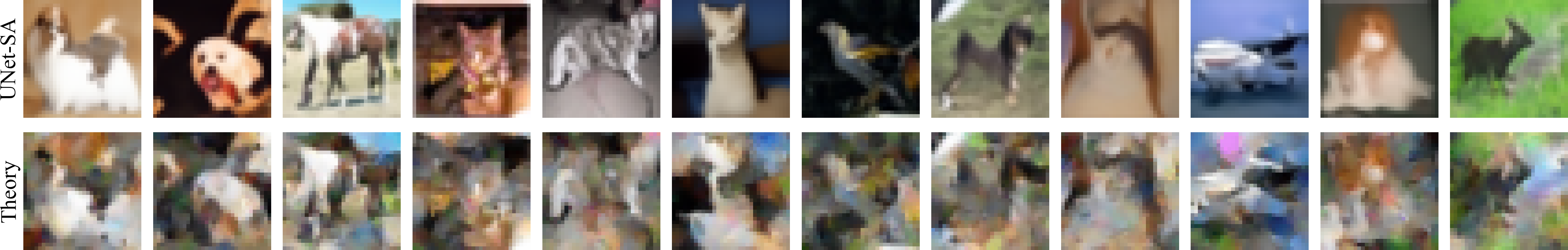} \vspace{-1em}
    \caption{Comparison between UNet+SA outputs (top row) and ELS machine outputs (bottom row) for the same noise inputs. For this class of inputs, the UNet+SA appears to carve out more semantically coherent objects out of the closely related ELS patch mosaic.}
    \label{fig:sa_fig}
\end{figure*}

\begin{figure}[t]
    \centering
    \includegraphics[width=\linewidth]{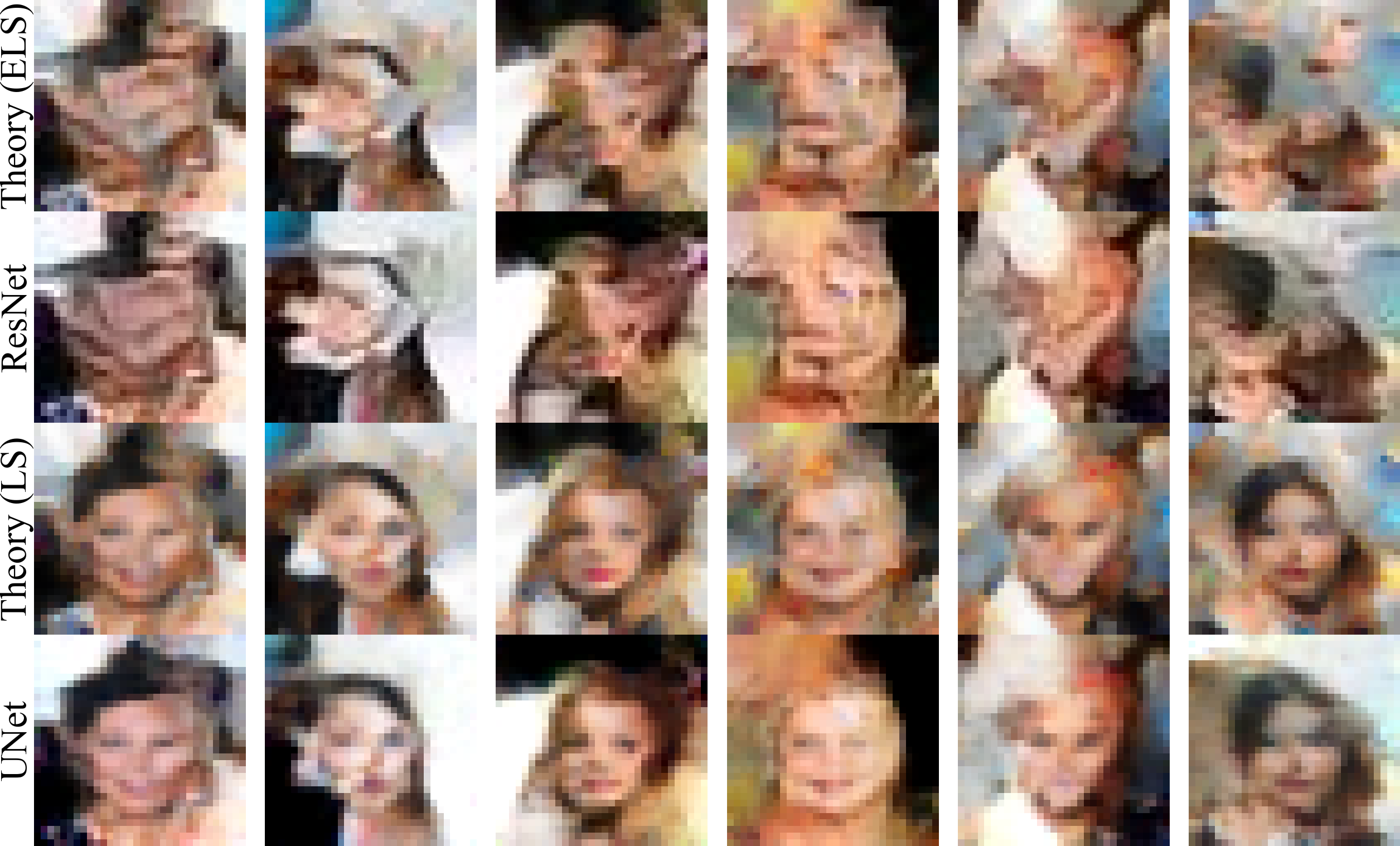}
    \caption{Comparison between LS, ELS, ResNet, and UNet outputs on CelebA. Each column represents identical initial noise inputs. Note the Unet produces more spatially structured faces than the ResNet, and the LS machine better explains this higher UNet performance than the ELS machine.}
    \label{fig:faces-figure}
    \vspace{-0.5em}
\end{figure}

\subsection{Spatial inconsistencies from excess late-time locality}

Diffusion models notoriously generate spatially inconsistent images at fine spatial scales, e.g. incorrect numbers of fingers and limbs. Indeed, these inconsistencies are considered a tell-tale sign of AI-generated images \cite{bird2024cifake, shen2024rethinking, lin2024detecting}. Our trained models on FashionMNIST also generate such inconsistencies, e.g. pants with too many or too few legs, shoes with more than one toe region, and shirts with incorrect numbers of arms. Remarkably, our theory, since it matches trained model outputs on a case by case basis, {\it also} reproduces these inconsistencies (Fig.~\ref{fig:inconsistencies}). Since our theory is completely mechanistically interpretable, it provides a clear explanation for the origin of these inconsistencies in terms of excessive locality at late stages of the reverse flow. The late-time ($t < 0.3$) locality for all models is less than about $5$ pixels (Fig.~\ref{fig:scales_jacobian}b). With such a small locality scale, different parts of the image more than a few pixels away must decide whether to develop into e.g. an arm or a pant leg without knowing the total number of limbs in the image; this process frequently results in incorrect numbers of total limbs. 

\subsection{UNets can fully break equivariance}\label{sec:unets_positional}

We note that for three datasets, MNIST, FashionMNIST and CIFAR10, the best matching theory that explains the outputs of zero-padded CNNs (for both ResNets and UNets) is the boundary-broken ELS machine (see Table \ref{tab:correlation_results} and Fig. \ref{fig:enter-label}).  However, interestingly, for CelebA, an LS machine that fully breaks equivariance better explains the outputs of the UNet, but not the ResNet, compared to the boundary broken ELS (Table \ref{tab:correlation_results}).  Indeed, the UNet creates more structured faces than the ResNet (compare rows 2 and 4 in figure Fig. \ref{fig:faces-figure}). The less structured faces of the ResNet are better explained by the boundary-broken ELS machine (compare rows 1 and 2 in Fig. \ref{fig:faces-figure}), while the more structured faces of the UNet are better explained by the LS machine with fully broken equivariance (compare rows 3 and 4 in Fig. \ref{fig:faces-figure}). 

An explanation for why the UNet can in principle fully break equivariance, while the ResNet cannot, is that the maximal possible receptive field (RF) size of the ResNet is 17x17 while the image is 32x32.  Thus, at any instant of time $t$, the ResNet score computation at pixels near the image center cannot depend on image data outside this RF.  However, the maximal possible RF size of the UNet covers the entire image.  Thus, the UNet can in principle use information over the entire image, including the boundary, to infer the absolute location of each pixel when computing the score at that pixel. Indeed, it does this for CelebA, possibly because for CelebA there are strong correlations between image neighborhoods and pixel locations (e.g. eyes, ears, mouths and noses all appear in similar locations across the dataset). However, for the other datasets, the UNet does not seem to infer absolute pixel location far from the boundary when computing the score at each instant of time, and so is better described by a boundary-broken ELS machine rather than an LS machine with fully broken equivariance.

\section{The relation between theory and attention}\label{sec:attention}

While the local theory explains the outputs of CNN-based diffusion models on a case by case basis with high accuracy, many diffusion models also include highly non-local self-attention (SA) layers. For example \cite{ho2020denoising}) added SA layers to a UNet (which we call a UNet-SA architecture). The non-locality of SA strongly violates the assumptions of our local theory.  This violation raises an important question: do the predictions of our local theory bear {\it any resemblance at all} to the non-local outputs of trained UNet+SA models?  

To address this question, we compare our existing ELS machine theory with the outputs of a publicly available UNet+SA model pretrained on CIFAR10  \cite{VSehwag_minimal_diffusion}. Strikingly, our ELS model, with no modification whatsoever, predicts the UNet+SA outputs on a case-by-case basis with a median of $r^2 \sim 0.77$ on $100$ sample images. This is substantially higher than the median $r^2 \sim 0.48$ of an IS machine baseline on the same images (see Fig.~\ref{fig:a3} for the entire distribution of $r^2$ values).

Qualitatively, the outputs of the UNet+SA model fall into three rough classes in which the UNet+SA produces: (1) a semantically incoherent image which nevertheless strongly resembles the prediction of the ELS machine (Fig.~\ref{fig:cifar10-att-incoherent}); (2) a semantically coherent image which has some quantitative correlation with, but little qualitative resemblance to, the ELS machine prediction (Fig.~\ref{fig:cifar10-att-nomatch}); and (3) a semantically coherent image that \textit{also} has a strong resemblance to the less semantically coherent ELS machine outputs (Fig.~\ref{fig:sa_fig}). 

This third class is the largest and most interesting of the three. Qualitatively, the UNet+SA appears to carve a semantically coherent object out of the patch mosaic of the ELS machine (compare top and bottom rows of Fig.~\ref{fig:sa_fig}). For example, the UNet+SA often cuts out a foreground object from the ELS patch mosaic, while smoothing the background and accentuating it from the foreground object.

Fig.~\ref{fig:cifar10-zeros-attention} shows a large set of comparisons between the ELS machine and UNet+SA outputs.  While these results show that the ELS theory bears in many cases both quantitative and qualitative resemblance to the UNet+SA outputs, a full quantitative theory of the role of attention in the creativity of diffusion models remains for future investigation.  However, the correspondences in Fig.~\ref{fig:sa_fig}, Fig.~\ref{fig:cifar10-att-incoherent}, and Fig.~\ref{fig:cifar10-zeros-attention} and the ELS correlations (y-axis) in Fig.~\ref{fig:a3}, suggest the ELS theory provides an important foundation for this endeavor.

\section{Discussion}
Developing a mechanistic understanding of how generative models convert their training data into novel outputs {\it far} from their training data is an important goal in the field of neural network interpretability. We have developed such an understanding for convolutional diffusion models of images that accurately predicts {\it individual} outputs on fixed random inputs in terms of the training data, for standard architectures (ResNets and UNets), standard datasets (MNIST, FashionMNIST, CIFAR10, and CelebA), and standard loss functions (score-matching). Moreover, our mechanistically interpretable theory of diffusion models is derived not from intensive and highly detailed analysis of the inner workings of trained networks (modulo matching spatial scales), as in most mechanistic interpretability works, but rather from a first principles approach stemming from analytic solutions for the optimal score subject to {\it only} $2$ posited inductive biases: locality and equivariance. The strong quantitative agreement between theory and experiment on a case-by-case basis suggests that these two inductive biases are {\it sufficient} to explain the creativity of convolution-only diffusion models. We hope this work provides a foundation for understanding the creativity of more powerful attention-enabled diffusion models trained on more complex datasets. 

\section*{Acknowledgements}
M.K. would like to acknowledge the support of the NSF Graduate Research Fellowship. M.K. would like to acknowledge the helpful conversations, comments, and feedback from Daniel Kunin, Atsushi Yamamura, and Feng Chen. S.G. thanks the Simons Foundation, a Schmidt Sciences Polymath Award, and an NSF CAREER award for funding.  

\section*{Impact Statement}
This paper presents work whose goal is to advance our understanding of Machine Learning systems. As the scope of the capabilities of these systems increase, and as these systems become more deeply integrated into socially important applications, it is imperative to develop a fundamental understanding of how these capabilities emerge. Unfortunately, the development of such fundamental understanding has lagged significantly behind advances in capabilities. Our work helps address this gap by developing a better understanding of simple but still highly nontrivial deep networks, hopefully paving the way for future studies that move our fundamental understanding of these systems closer to the state-of-the-art of capabilities.

% This paper presents work whose goal is to advance the field of Machine Learning. There are many potential societal consequences of our work, none which we feel must be specifically highlighted here.

% In the unusual situation where you want a paper to appear in the
% references without citing it in the main text, use \nocite
% \nocite{langley00}

\bibliography{main}
\bibliographystyle{icml2025}

%%%%%%%%%%%%%%%%%%%%%%%%%%%%%%%%%%%%%%%%%%%%%%%%%%%%%%%%%%%%%%%%%%%%%%%%%%%%%%%
%%%%%%%%%%%%%%%%%%%%%%%%%%%%%%%%%%%%%%%%%%%%%%%%%%%%%%%%%%%%%%%%%%%%%%%%%%%%%%%
% APPENDIX
%%%%%%%%%%%%%%%%%%%%%%%%%%%%%%%%%%%%%%%%%%%%%%%%%%%%%%%%%%%%%%%%%%%%%%%%%%%%%%%
%%%%%%%%%%%%%%%%%%%%%%%%%%%%%%%%%%%%%%%%%%%%%%%%%%%%%%%%%%%%%%%%%%%%%%%%%%%%%%%
\newpage
\appendix
\onecolumn

% \begin{appendices}
\section{Mathematical Preliminaries}\label{app:math}

\subsection{Notation conventions}
\begin{figure}
    \centering
    \includegraphics[width=0.75\linewidth]{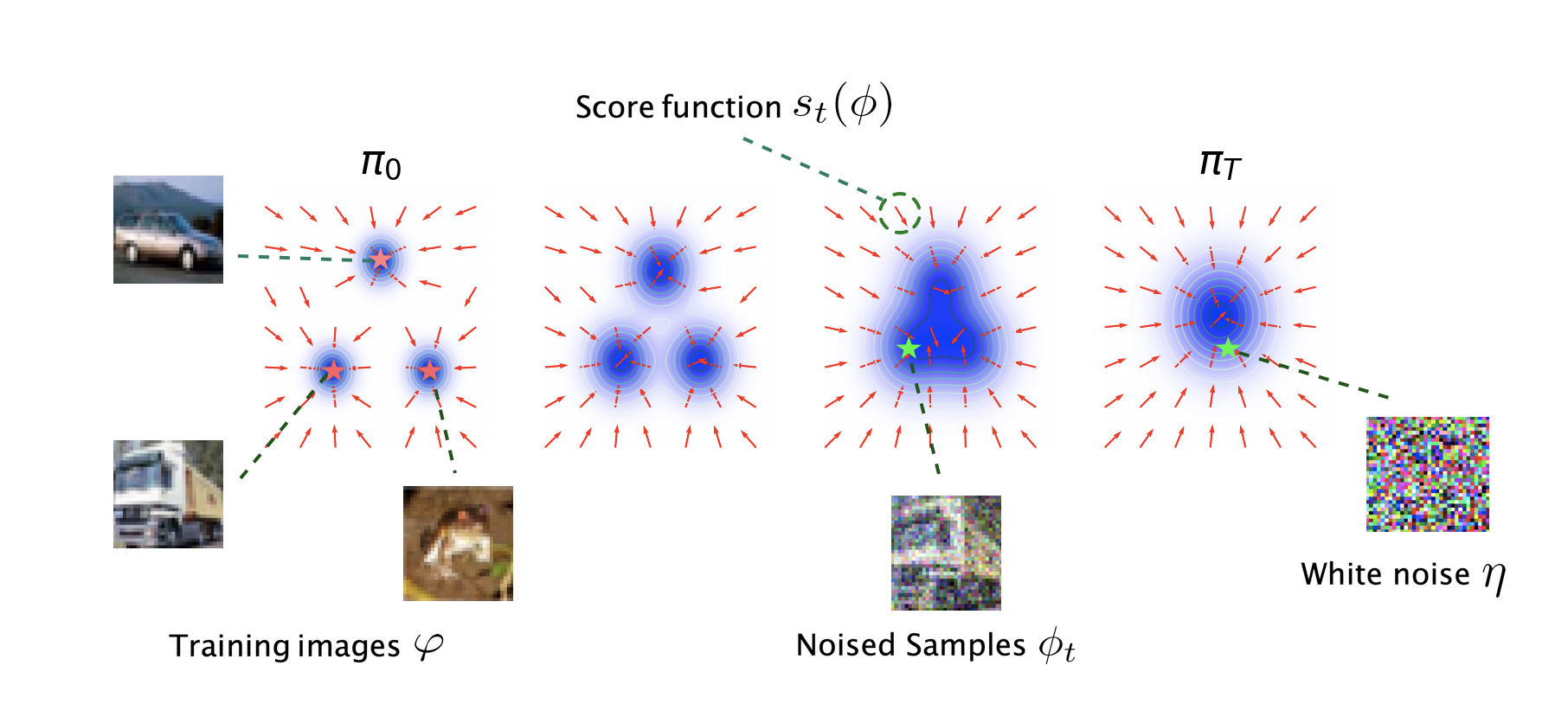}
    \caption{A schematic illustration of score-matching diffusion.}
    \label{fig:notation}
\end{figure}
In what follows, we use the following notation:
\begin{itemize}
    \item $\mathcal{D}$ will represent the training set.
    \item $\varphi \in \mathbb{R}^N$ will represent an example from the training set. For images of size $L$ pixels by $L$ pixels by $C$ channels, we have $N=L\times L \times C$.  
    \item $\phi$ will represent any arbitrary image (or other data) that we are plugging into the score function/diffusion model. 
    \item $x$ represents a pixel location in an image.
    \item For image data, $\phi(x)$ and $\varphi(x)$ will represent the pixel values of the images $\phi$ and $\varphi$ at pixel location $x$; both are elements of $\mathbb{R}^C$. 
    \item $M[\phi]: \mathbb{R}^N \to \mathbb{R}^N$ represents a model that takes in an image $\phi$ and produces a new image (e.g. an estimate of the score function).  We will denote by $M[\phi](x) \in \mathbb{R}^C$ the value of the outputs of this model, given an input $\phi$, at the pixel location $x$.
    \item $\phi_{\Omega_x}$ and $\varphi_{\Omega_x}$ will represent the restriction of images $\phi$ and $\varphi$ to a neighborhood $\Omega_x$ around a pixel $x$. We usually take $\Omega_x$ to be a square patch of size $P \times P$, with $P$ odd, containing pixel $x$ at the center. In this case, $\phi_{\Omega_x}$ and $\varphi_{\Omega_x}$ are vectors in $\mathbb{R}^{P\times P\times C}$. However, the theoretical framework supports arbitrary assignments from $x \to \Omega_x$.
    \item For a square image patch $\varphi$ with an odd-dimension side length, the value $\varphi(0) \in \mathbb{R}^C$ indicates the pixel at the center of the patch.
    \item $P_{\Omega}(\mathcal{D})$ will denote the set of all $\Omega$-shaped patches drawn from elements of $\mathcal{D}$. 
    \item $\mathcal{N}(x|\mu, \Sigma)$ represents the PDF of the normal distribution with mean $\mu$ and covariance $\Sigma$.  We also use the short-hand $\mathcal{N}(\mu, \Sigma)$ when we do not need to refer to the name of a specific random variable. 
\end{itemize}

\subsection{Stochastic differential equations (SDEs) and Probability Flow}
In probabilistic modeling, we are often confronted with the problem of sampling from a data distribution whose exact form we do not have access to, or whose form makes direct sampling difficult. Diffusion models are an approach to sampling from such distributions by learning a time-inhomogenous differential equation that transports samples from a simple Gaussian distribution to the more complex distribution of interest.

More formally, consider a time-dependent (Itô) stochastic differential equation, given as follows:
\begin{align}\label{eq:forward_sde}
    d\phi_t &= f_t(\phi_t) \,dt + g_t\,dW_t.
\end{align}
Here $W_t$ is a standard Wiener process and $dW_t$ is its differential. We call this stochastic process the `forward' process. It starts from the data distribution $\pi_0(\phi)$ and induces a flow on probability distributions $\pi_t(\phi)$ for $t\geq 0$ described by associated Fokker-Planck equation:
\begin{align}\label{eq:forward_flow}
    \frac{\partial \pi_t(\phi)}{\partial t} = -\nabla \cdot (f_t(\phi) \pi_t(\phi)) + \frac{1}{2} \nabla^2(g_t^2 \pi_t(\phi)).
\end{align}
We will imagine that our forward process is constructed so that as $t \to \infty$ (or as $t \to T$ for some finite time $T$), $\pi_t$ converges to some tractable $\pi_{\infty}$, typically a Gaussian with finite variance. 

The idea underpinning diffusion models (or, more technically, DDIMs, the deterministic variant of diffusion models considered for the most part in this paper) is to look for a \textit{deterministic, time-dependent vector field} $v_t(\phi)$ that induces the same flow on distributions as (\ref{eq:forward_flow}). Then one can simply reverse this flow to sample from $\pi_0(t)$ by first sampling from the simple distribution $\phi_T \sim \pi_T$, then evolving the sample deterministically backwards in time from $t = T$ to $t = 0$ under the ODE
\begin{align}
    \frac{d \phi_t}{dt} &= v_t(\phi_t).
\end{align}
This ODE induces a flow on probability distributions $\pi_t(\phi)$ described by the advection equation
\begin{align}\label{eq:advection}
    \frac{\partial \pi_t}{\partial t} &= -\nabla \cdot [v_t(\phi) \pi_t(\phi)].
\end{align}
We want this advection process above to induce the \textit{same flow} on distributions as the original flow (\ref{eq:forward_flow}), when run in reverse starting, from the simple final distribution $\pi_{T}$. (This setup is closely related to `flow matching' models: see \cite{lipman2022flow} for a review). Interestingly, $v_t(\phi)$ can be easily identified by rewriting the flow in (\ref{eq:forward_flow}) as
\begin{align}\label{eq:forward2}
    \frac{\partial \pi_t(\phi)}{\partial t} &= -\nabla \cdot([f_t(\phi) - \frac{1}{2} g_t^2 \nabla \log \pi_t(\phi)]\pi_t(\phi)).
\end{align}
By matching (\ref{eq:advection}) and (\ref{eq:forward2}), we find 
\begin{align}
    v_t(\phi) &= f_t(\phi) - \frac{1}{2} g_t^2 \nabla \log \pi_t(\phi).
\end{align}
This vector field is sometimes known as the `probability flow.' The function
\begin{align}
    s_t(\phi) &= \nabla \log \pi_t(\phi)
\end{align}
is known as the \textit{score function}, and contains all of the complicated dependency on the initial distribution $\pi_0(\phi)$ that we would like to capture in our model.

\subsection{Diffusion models}

The most common choice of forward process (\ref{eq:forward_sde}) is an inhomogenous Ornstein–Uhlenbeck (OU) process process of the following form:
\begin{align}
    d\phi_t &= -\gamma_t \phi_t + \sqrt{2 \gamma_t} dW_t
\end{align}
for which the probability flow is given by
\begin{align}
    v_t(\phi) &= -\gamma_t (\phi + \nabla \log \pi_t(\phi)).
\end{align}
The reason for this choice is that the finite-time marginals $\pi_t$ for this distribution can be sampled from tractably. We can generate samples $\phi_t \sim \pi_t$ by computing the following linear linear combination:
\begin{align}\label{eq:noise_interp}
    \phi_t &= \sqrt{\bar{\alpha}_t} \phi_0 + \sqrt{1 - \bar{\alpha}_t} \eta_t
\end{align}
with $\phi_0 \sim \pi_0$ a sample from the target distribution and $\eta_t \sim \mathcal{N}(0,I)$ a vector of isotropic Gaussian noise. The values of $\bar{\alpha}_t$ depend on the choice of $\gamma_t$ via the following formula:
\begin{align}
    \bar{\alpha}_t &= \exp(-2\int_0^t \gamma_t\,dt).
\end{align}
In practice, the values $\bar{\alpha}_t$ are typically chosen first and $\gamma_t$ is then specified implicitly by this choice. The choice of $\bar{\alpha}_t$ is known as the `noise schedule' for a diffusion model; typically, we choose $\bar{\alpha}_0 = 1$ (so that $t = 0$ corresponds to uncorrupted sample) and $\bar{\alpha}_T = 0$ for some large but finite value of $T$ (so that the entire reverse process can take place in finite time). At a distributional level, the solution of (\ref{eq:forward_flow}) for this process is given by
\begin{align}
    \pi_t(\phi) &= \int \pi_0(\phi_0) \mathcal{N}(\phi| \sqrt{\bar{\alpha}_t} \phi_0, (1 - \bar{\alpha_t}) I) \,d\phi_0.
\end{align}
The score function for $\pi_t$ can then be obtained analytically in terms of $\pi_0$:
\begin{equation}\label{eq:score_analytic}
\begin{split}
    s_t(\phi) &=  -\frac{1}{1 - \bar{\alpha}_t}\int \frac{\pi_0(\phi_0) \mathcal{N}(\phi | \sqrt{\bar{\alpha}_t} \phi_0 ,(1 - \bar{\alpha}_t) I)}{\pi_t(\phi)} (\phi_t - \sqrt{\bar{\alpha}_t} \phi_0)\,d\phi_0\\
    &= -\frac{1}{1 - \bar{\alpha}_t} \int \mathbb{P}(\phi_0|\phi_t=\phi) (\phi_t - \sqrt{\bar{\alpha}_t} \phi_0)\,d\phi_0.
\end{split}
\end{equation}
There is an extremely convenient fact about this particular score function that we can take advantage of in order to learn it from data. Given a particular sample $\phi_t$ generated by the forward noising process, the score function is proportional to the conditional expectation of the added noise $\eta_t$ from (\ref{eq:noise_interp}), given $\phi_t$:
\begin{align}\label{eq:score_noise_connection}
    s_t(\phi) &= -\frac{1}{\sqrt{1 - \bar{\alpha}_t}} \langle \eta_t | \phi_t=\phi\rangle.
\end{align}
This result is known as Tweedie's theorem. A standard result in statistics is that the conditional expectation $\langle \eta_t | \phi_t \rangle$ is the functional optimum of the following loss function:
\begin{align}\label{eq:loss_func}
    \mathcal{L}_t(f) &= \mathbb{E}_{\phi_0 \sim \pi_0,\eta_t \sim \mathcal{N}(0,I)}[\norm{f(\phi_t(\phi_0,\eta_t)) - \eta_t}^2] 
\end{align}
for $\phi_t$ defined in (\ref{eq:noise_interp}); the following slightly rescaled loss can be used if score-matching is preferred:
\begin{align}\label{eq:noise_score_match}
    \mathcal{L}_t(f) &= \mathbb{E}_{\phi_0 \sim \pi_0,\eta_t \sim \mathcal{N}(0,I)}[\norm{f(\phi_t(\phi_0,\eta_t)) + (1 - \bar{\alpha}_t)^{-1/2}\eta_t}^2]. 
\end{align}
In practice, we model the score using a single neural network $f_\theta(x,t)$ for all times $t \in [0,T]$, using the following objective:
\begin{align}
    L(\theta) = \mathbb{E}_{t \sim U(0,T), \phi_0 \sim \pi_0,\eta_t \sim \mathcal{N}(0,I) }[\norm{f_\theta(\phi_t(\phi_0,\eta_t),t) - \eta_t}^2].
\end{align}

\subsection{The empirical score function}
In practice, we never have direct access to the data distribution $\pi_0$ that we are attempting to sample from; we only have access to the  discrete empirical prior defined by a particular training set $\mathcal{D}$:
\begin{align}\label{eq:empirical_prior}
    \pi_0(\phi) &= \frac{1}{|\mathcal{D}|} \sum_{\varphi \in \mathcal{D}} \delta(\phi - \varphi).
\end{align}
At finite time $t$, the empirical distribution of noised training examples is simply a mixture of Gaussians centered at the (rescaled) training data points:
\begin{align}
    \pi_t(\phi) &= \frac{1}{|\mathcal{D}|} \sum_{\varphi \in \mathcal{D}} \mathcal{N}(\phi|\sqrt{\bar{\alpha}_t} \varphi, (1 - \bar{\alpha}_t)I).
\end{align}
The score function (\ref{eq:score_analytic}) for this distribution is then simply given by
\begin{align}\label{eq:optimal_discrete_score2}
    s_t(\phi) &= -\frac{1}{1 - \bar{\alpha}_t}\sum_{\varphi \in \mathcal{D}} (\phi - \sqrt{\bar{\alpha}_t} \varphi) W_t(\varphi|\phi), \\
    W_t(\varphi|\phi) &= \frac{\mathcal{N}(\phi| \sqrt{\bar{\alpha}_t} \varphi, (1 - \bar{\alpha}_t ) I)}{\sum_{\varphi' \in \mathcal{D}} \mathcal{N}(\phi| \sqrt{\bar{\alpha}_t} \varphi', (1 - \bar{\alpha}_t ) I)}.
\end{align}
Intuitively, this corresponds to computing the conditional average over the added noise, by averaging the proposed noise vectors $\eta_t \propto (\phi - \sqrt{\bar{\alpha}_t}\varphi)$ between our observed example $\phi$ and each training example $\varphi$, weighted by the probability $W(\varphi|\phi)$ of $\varphi$ being the training example that $\phi$ originated from. This probability is in turn computed essentially by Bayes theorem: the probability of starting from a training example $\varphi$, given the observed $\phi$, is given by the likelihood of generating the noise needed to go from $\varphi$ to $\phi$, divided by the likelihood of going from $\varphi'$ to $\phi$ for all possible training examples $\varphi'$. Appealingly, the weights $W(\varphi|\phi)$ are given by computing a simple soft-max over a simple quadratic loss function $-\frac{1}{2(1 - \bar{\alpha}_t)}\norm{\phi - \sqrt{\bar{\alpha}_t} \varphi}^2$ for every point in the training set.

It should be emphasized at this point that the ideal score function is \textit{not} representative of real diffusion models. Primarily: it always memorizes the training data. More importantly in practice, this memorization property becomes manifest \textit{very early} in the reverse process for high dimensional data, due to the typically large separation between training points in Euclidean space. This is a manifestation of the curse of dimensionality– it would require an amount of data \textit{exponential in the dimension} to provide sufficiently good coverage of the underlying space for the ideal \textit{empirical} score function to well-approximate the \textit{true} ideal score function over all inputs over all times.

The failure of the ideal score function as a model for realistic diffusion models suggests that we should try to understand the particular manner in which they fail to optimally solve the task that they are trained on. In particular, we are motivated to look for the \textit{implicit and explicit biases and constraints} that prevent these models from learning the ideal score function, and then understand what they do instead under these limitations.

\section{Formalism}\label{app:proofs}

\subsection{Optimal local translationally equivariant score matching}
Fully translationally equivariant local models $M_t$ can be written in the following way:
\begin{align}\label{eq:translational_kernel}
    M_t[\phi](x) = f[\phi_{\Omega_x}],
\end{align}
where $\phi_{\Omega_x}$ is the restriction of $\phi$ to the neighborhood $\Omega_x$ around pixel $x$. In this section, we will use circular boundary conditions, so that if $x$ is near an image border, the neighborhood $\phi_{\Omega_x}$ includes the pixels on the opposite side of the image near the corresponding border (we will revisit this in the next section). This functional form reflects the locality constraint by making manifest that the output at a pixel location $x$ depends only on the patch $\phi_{\Omega_x}$ around it. Equivariance is reflected in the fact that the output of the model at every point $x$ is determined by the same function of the input patch. $f$ should be thought of as a function mapping $\mathbb{R}^{C \times |\Omega|} \to \mathbb{R}^C$, where $C$ is the number of channels in the image and $|\Omega|$ is the number of pixels in the local patch $\Omega$. The problem of identifying the optimal local/equivariant model can thus be framed as finding the $f$ that minimizes the score matching objective:
\begin{align}
    \mathcal{L} = \sum_x \mathbb{E}_{\phi \sim \pi_t}[\norm{f[\phi_{\Omega_x}] - s_t[\phi](x)}^2]
\end{align}
Writing this out concretely gives 
\begin{align}
    \mathcal{L} = \int \pi_t(\phi) \sum_{x} \norm{f(\phi_{\Omega_x}) - s_t[\phi](x)}^2 \,d\phi.
\end{align}
To find the functional optimum, we vary the objective with respect to $f(\Phi)$, with $\Phi$ any arbitrary patch, and set this variation to zero. This yields the condition
\begin{align}
    0 &= \sum_x \int \pi_t(\phi) (f(\phi_{\Omega_x}) - s_t[\phi](x)) \delta(\phi_{\Omega_x} - \Phi)\,d\phi.
\end{align}
We can rearrange this into the following form:
\begin{align*}
    f(\Phi) \sum_x \pi_t(\phi_{\Omega_x} = \Phi) &= \sum_x \int \delta(\phi_{\Omega_x} - \Phi)\,\pi_t(\phi) s_t[\phi](x) \,d\phi\\
    &= \sum_x \int  \delta(\phi_{\Omega_x} - \Phi) \nabla_{\phi(x)} \pi_t(\phi)\,d\phi\\
    &= \sum_x \nabla_{\Phi(0)} \pi_t(\phi_{\Omega_x} = \Phi)
    % &= \partial_{\phi(0)} \sum_x \pi_t(\phi_{\Omega_x} = \Phi)
\end{align*}
Here $\Phi(0) \in \mathbb{R}^C$ is the pixel value in the center of the patch $\Phi$. $\pi_t(\phi_{\Omega_x} = \Phi)$ indicates the marginal probability under the distribution $\pi_t$ that the patch $\phi_{\Omega_x}$ equals the target patch $\Phi$. The distribution $\sum_{x} \pi_t(\phi_{\Omega_x} = \Phi)$ is then proportional to the marginal distribution that a randomly-selected $\Omega$-shaped-patch in the image $\phi$ equals $\Phi$. Dividing through by this marginal, we obtain
\begin{align}\label{eq:equivariant_marginal_score}
    f(\Phi) = \nabla_{\Phi(0)} \log \sum_x \pi_t(\phi_{\Omega_x} = \Phi)
\end{align}
i.e. we find that $f(\Phi)$ is simply the score function of the modified marginal density $\sum_x \pi_t(\phi_{\Omega_x} = \Phi)$. Since $\pi_t(\phi)$ is a mixture of Gaussians, the marginal $\pi_t(\phi_{\Omega_x} = \Phi)$ can be obtained simply and is given by
\begin{align}
    \pi_t(\phi_{\Omega_x} =\Phi) = \sum_{\varphi \in \mathcal{D}} \mathcal{N}(\Phi|\sqrt{\bar{\alpha}}_t \varphi_{\Omega_x}, (1 - \bar{\alpha}_t) I).
\end{align}
Summing over $x$ gives us
\begin{align}
    \sum_x \pi_t(\phi_{\Omega_x} = \Phi) = \sum_{\varphi \in P_{\Omega}(\mathcal{D})} \mathcal{N}(\Phi | \sqrt{\bar{\alpha}_t} \varphi, (1 - \bar{\alpha}_t) I)
\end{align}
where $P_{\Omega}(\mathcal{D})$ is the set of all $\Omega$ patches in the training set $\mathcal{D}$. Finally, taking the derivative with respect to $\Phi(0)$ and substituting $\phi_{\Omega_x}$ for $\Phi$ gives us the final answer for $f[\phi_{\Omega_{x}}]$, which, when inserted into (\ref{eq:translational_kernel}), yields the final answer for $M_t$:
\begin{align}\label{eq:translational_equivariant_idscore}
    M_t[\phi](x) &= -\frac{1}{1 - \bar{\alpha}_t} \sum_{\varphi \in P_{\Omega}(\mathcal{D})} (\phi(x) - \sqrt{\bar{\alpha}_t}\varphi(0)) W(\varphi|\phi_{\Omega_x}) \\
    W(\varphi|\phi_{\Omega_x}) &= \frac{\mathcal{N}(\phi_{\Omega_x}|\sqrt{\bar{\alpha}_t} \varphi, (1 - \bar{\alpha}_t) I)}{\sum_{\varphi' \in P_{\Omega}(\mathcal{D})} \mathcal{N}(\phi_{\Omega_x} |\sqrt{\bar{\alpha}_t} \varphi', (1 - \bar{\alpha}_t) I)}.
\end{align}
We term the reverse diffusion model parameterized by $M_t$ i (\ref{eq:translational_equivariant_idscore}) the Equivariant Local Score (ELS) Machine.

This result has a simple intuitive interpretation. Firstly, it should be noted that the form of the resulting approximation to the score function strongly resembles the form of the true score function \eqref{eq:optimal_discrete_score2}. In that case, the score function computation could be framed as guessing the added noise by finding the necessary added noise for each possible training set element, computing the likelihood of generating that noise under a Gaussian noise model, and then averaging the possible noises over the entire training set weighted by the Bayesian posterior over each possible noised example.

The ELS machine (\ref{eq:translational_equivariant_idscore}) can be interpreted similarly. However, a very important distinction is that the \textit{Bayes weights are pixelwise-decoupled.} Under the exact computation of the score function, the Bayes weights are computed based on all available information in the image, and shared across every pixel; under the locality-constrained approximation, each pixel independently computes a separate set of Bayes weights for each training set element, based on its local receptive field. This decoupling of the belief states of different pixels means that under the reverse denoising process parameterized by (\ref{eq:translational_equivariant_idscore}), \textit{different pixels will be drawn towards different elements of the training set}. At scales below the locality scale the final denoised images should (roughly) resemble part of a training set image; however, at larger scales, the resulting images will not resemble any particular training set image, but rather a kind of patchwork quilt/mosaic of randomly combined training set images. We make this result more precise in (\ref{app:sample_dist}).

The role played by equivariance can likewise be interpreted very simply as removing each pixel's ability to locate itself within the image. Position is therefore promoted to a latent variable that must be integrated over, in addition to the training set element itself. This results in needing to compute a Bayes weight not only for each correspondingly-located patch in the training set, but \textit{every possible patch} in the training set.

\subsection{Adding borders}
\label{app:addingborders}

\begin{figure}[t]
    \centering
    \includegraphics[width=0.45\linewidth]{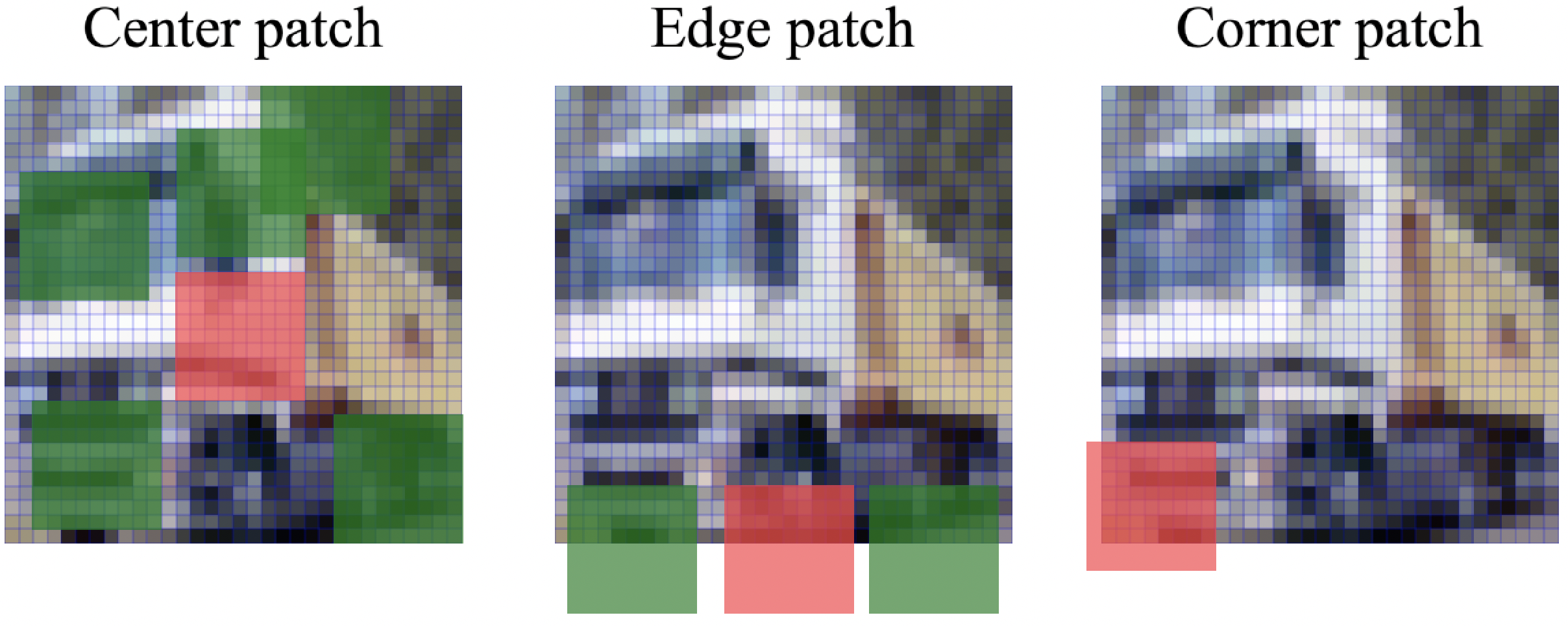}
    \caption{In the presence of zero-padded borders, different dictionaries of training set patches are used for the ELS machine computation depending on the contextual information provided by the visible border within the patch. For a central patch (left, red patch) without border information, training set patches (green patches) are sourced from the entire image interior. For edge patches (middle, red patch), training set patches (green patches) are sourced from everywhere along the edge at the same distance from the border. For corner patches (right, red patch), only patches from that exact location are used in the computation. }
    \label{fig:border-regions}
\end{figure}

There is an ambiguity about the behavior of a convolutional neural network for pixels near enough to the boundary of an image such that the network's receptive field extends past that boundary. One option in that situation is to enforce circular boundary conditions, so that the convolution operation `wraps around' to the other side upon encountering the boundary. This approach is not typically used in practice; more commonly, `zero padding' is introduced, wherein pixels outside of the image are treated as zeros for the purposes of the convolution operation.

In the presence of zero-padding, the results given above concerning the optimal local equivariant approximation to the score are nearly identical; in fact, the fundamental identity (\ref{eq:translational_kernel}) still holds. However, we must modify the interpretation of the visible patch $\phi_{\Omega_x}$ for a pixel $x$ near the border. Instead of considering the patch to include `wrapped around' portions of the image, we instead simply extend it with zeros in all locations where it extends past the border. 

When the ELS machine takes as input the patch $\phi_{\Omega_x}$, it computes the conditional probability that it corresponds to a noising of each particular patch in the training set. Formally, getting an exactly zero value at any pixel location occurs with probability zero. Thus, observing a patch $\phi_{\Omega_x}$ with zero-padding indicates with probability 1 that the patch is a corruption of a training set patch that came from a location inside the image consistent with the observed border information. We are thus able to write the ELS machine in the presence of a zero-padded boundary as
\begin{align}
    M_t[\phi](x) &= -\frac{1}{1 - \bar{\alpha}_t} \sum_{\varphi \in P_\Omega^x(\mathcal{D})} (\phi(x) - \sqrt{\bar{\alpha}_t}\varphi(0)) \frac{\mathcal{N}(\phi_{\Omega_x}|\sqrt{\bar{\alpha}_t} \varphi, (1 - \bar{\alpha}_t) I)}{\sum_{\varphi' \in P_\Omega^x(\mathcal{D})} \mathcal{N}(\phi_{\Omega_x} |\sqrt{\bar{\alpha}_t} \varphi', (1 - \bar{\alpha}_t) I)} .
\end{align}
The only modification to the ELS machine (\ref{eq:translational_equivariant_idscore}) is that we have replaced the set of all patches $P_{\Omega}(\mathcal{D})$ in the sum with the $x$-dependent patch dictionary $P_{\Omega}^x(\mathcal{D})$, corresponding to the collection of patches consistent with the border data at location $x$. These collections are illustrated in figure \ref{fig:border-regions}.

\subsection{Optimal equivariant score matching for a general symmetry group}\label{app:equivariance}

In many diffusion model applications outside of computer vision, equivariance under more general symmetry groups is built in to the architecture of the backbone model. For instance, molecular diffusion models are sometimes made equivariant under $E(3)$, the group of isometries on Euclidean space \cite{hoogeboom2022equivariant}. Diffusion transformers \cite{peebles2023scalable} are also naturally equivariant under the group of sequence permutations, although this equivariance is broken in a controlled way by the inclusion of positional embeddings. We are thus motivated to study the question of optimality under the constraint of equivariance under a general group of symmetries $G$, which we define as follows:
\begin{definition}
Let $G$ be a particular group of transformations acting on data $\phi$. We say that a model $M_t$ is $G$-equivariant if, for any $U \in G$, our model satisfies
\begin{align}
    M_t[U \phi] &= U M_t[\phi].
\end{align}\end{definition}
The result is given here:
\begin{theorem}
The optimal $G$-equivariant approximation to the empirical score function (\ref{eq:optimal_discrete_score}) under the score matching objective (\ref{eq:noise_score_match}) is given by the empirical score function for the dataset $G(\mathcal{D})$ consisting of the orbit of the dataset $\mathcal{D}$ under the group $G$.
\end{theorem}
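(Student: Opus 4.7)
The plan is to reduce the constrained optimization over $G$-equivariant models to an \emph{unconstrained} denoising score matching problem on a symmetrized dataset, by exploiting a symmetrization trick in the loss combined with the standard Tweedie/mixture-of-Gaussians calculation already used in \eqref{eq:optimal_discrete_score2}.

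First, I would rewrite the score-matching objective in its denoising form \eqref{eq:noise_score_match},
\begin{equation*}
\mathcal{L}_t(M_t) \;=\; \mathbb{E}_{\phi_0 \sim \pi_0,\,\eta \sim \mathcal{N}(0,I)}\bigl[\norm{M_t[\phi_t] + (1-\bar{\alpha}_t)^{-1/2}\eta}^2\bigr] + C_t,
\end{equation*}
where $\phi_t = \sqrt{\bar{\alpha}_t}\phi_0 + \sqrt{1-\bar{\alpha}_t}\eta$ and $C_t$ does not depend on $M_t$. I then fix any $U\in G$, which I assume acts as an orthogonal isometry on $\mathbb{R}^N$ so that the isotropic Gaussian law of $\eta$ is invariant. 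Substituting $\phi_0 \mapsto U\phi_0$ and $\eta\mapsto U\eta$ inside the expectation, and then pulling $U$ out of $M_t[U\phi_t] = UM_t[\phi_t]$ and the noise term using the isometry $\norm{Uv} = \norm{v}$, one finds that $\mathcal{L}_t(M_t)$ is \emph{unchanged} when $\pi_0$ is replaced by its pushforward $U_*\pi_0$. This is the key identity.

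Averaging this identity uniformly over $G$ (replacing $\frac{1}{|G|}\sum_{U}$ by the Haar integral for continuous $G$) yields
\begin{equation*}
\mathcal{L}_t(M_t) \;=\; \mathbb{E}_{\phi_0\sim\tilde{\pi}_0,\,\eta}\bigl[\norm{M_t[\phi_t] + (1-\bar{\alpha}_t)^{-1/2}\eta}^2\bigr] + C_t, \qquad \tilde{\pi}_0 \;=\; \tfrac{1}{|G|}\sum_{U\in G} U_*\pi_0.
\end{equation*}
Applied to the empirical prior \eqref{eq:empirical_prior}, this $\tilde{\pi}_0$ is (up to possible stabilizer multiplicities that cancel) precisely the empirical distribution on the orbit $G(\mathcal{D})$. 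The right-hand side is now an unconstrained denoising score-matching loss against $\tilde{\pi}_0$, whose pointwise functional minimizer is $\tilde{s}_t = \nabla\log\tilde{\pi}_t$, computed explicitly by the Tweedie / Gaussian-mixture argument already giving \eqref{eq:optimal_discrete_score2}, but applied to $G(\mathcal{D})$ in place of $\mathcal{D}$.

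To finish, I have to verify admissibility: the unconstrained minimizer $\tilde{s}_t$ must itself satisfy the equivariance constraint. This is immediate because the Gaussian noising kernel commutes with any isometry, so $\tilde{\pi}_t$ is $G$-invariant by construction, and the gradient of a $G$-invariant log-density is automatically $G$-equivariant. Hence $\tilde{s}_t$ is the optimum over \emph{all} models and a fortiori over $G$-equivariant ones. The main obstacle is purely bookkeeping in the change-of-variables step: one must confirm that every $U\in G$ is a measure-preserving isometry so that neither the squared-norm integrand nor the Gaussian measure on $\eta$ picks up a Jacobian. This holds for the translation group on images with circular padding used in the ELS derivation, as well as for $E(n)$, orthogonal rotation, and permutation groups relevant to molecular and transformer diffusion models; for non-orthogonal actions a Jacobian correction would be needed and the clean identification with the orbit-augmented empirical score would break down.
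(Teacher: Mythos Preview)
Your argument is correct and is a genuinely different route from the paper's. The paper proceeds by an \emph{orbit decomposition}: it fixes a representative $\phi_0$, uses equivariance to write $M_t[U\phi_0]=UM_t[\phi_0]$, reduces to a weighted least-squares regression for $M_t[\phi_0]$ with weights $\pi_t(U^{-1}\phi_0)$, and then does an explicit algebraic manipulation to recognize the resulting group average as $\nabla\log\int_G \pi_t(U^{-1}\phi)\,dU$, i.e.\ the score of the $G$-symmetrized noised density. Your proof instead symmetrizes the \emph{loss}: you show that for any equivariant $M_t$ the denoising loss against $\pi_0$ equals that against $U_*\pi_0$ (via the change of variables $(\phi_0,\eta)\mapsto(U\phi_0,U\eta)$, orthogonality of $U$, and equivariance), average over $G$, and thereby reduce the constrained problem to an \emph{unconstrained} denoising problem on $\tilde\pi_0$, whose minimizer is already known from \eqref{eq:optimal_discrete_score2}; you then close the loop by checking that this unconstrained minimizer is itself equivariant. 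Both proofs ultimately rest on the same assumption (each $U$ acts as an isometry preserving the isotropic Gaussian), but your approach bypasses the orbit bookkeeping and the algebraic rewriting of the weighted average, and makes the structural reason---that for equivariant models the loss cannot distinguish $\pi_0$ from its symmetrization---completely transparent. The paper's approach, on the other hand, yields the explicit integral formula for $M_t[\phi]$ as a $\pi_t$-weighted group average \emph{before} identifying it as a score, which is more constructive and slightly closer in spirit to the LS/ELS derivations elsewhere in the paper.
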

\begin{proof}
Let $M_t$ be a $G$-equivariant model. For simplicity, we will assume that $M_t$ is being optimized with the following loss:
\begin{align}\label{eq:score_matching}
    L_t &= \mathbb{E}_{\phi \sim \pi_t}[\norm{M_t[\phi] - s_t(\phi)}^2]
\end{align}
where $s_t = \nabla_{\phi} \log \pi_t(\phi)$ is the ideal score function. First consider the orbit of a single point $\phi_0$ under the group $G$, given by $G[\phi_0] = \{ \phi : \exists U \in G: U \phi_0 = \phi \}$. For any $\phi \in G[\phi_0]$, there is an element $U\in G$ such that $U^{-1} \phi = \phi_0$, and thus the output of an equivariant model $M_t[\phi]$ is simply $U M_t[\phi_0]$. The problem of picking an optimal $M_t[\phi]$ for any $\phi \in G[\phi_0]$ can thus be reduced to a standard linear regression for $M_t[\phi_0]$, under the loss
\begin{align*}
    \tilde{L}_t &= \mathbb{E}_{\phi \sim \pi_t | \phi \in G(\phi_0)}[\norm{M_t[\phi] - \nabla \log \pi_t(\phi)}^2]\\
    &= \int_G \frac{\pi_{t}[U^{-1}\phi_0]}{\pi_t(G[\phi_0])} \norm{ M_t[\phi] - U \nabla \log \pi_t(U^{-1} \phi_0)}^2 \,dU
\end{align*}
where in the second line we have used the property of unitaries that $\norm{U x}^2 = \norm{x}^2$. Here $\pi_t(G[\phi_0])$ indicates the probability density assigned to the entire orbit $G[\phi_0]$ by $\pi_t$. We have used the orbit-stabilizer property to write the integral over the orbit as an integral over the entire group. Despite its complexity this formula represents a standard least-squares objective for $M_t[\phi]$, the minimizer of which is simply the weighted average of the target function $U \nabla \log \pi_t(U^{-1} \phi_0)$ weighted by $\frac{\pi_t[U^{-1}\phi_0]}{\pi_t(G[\phi_0])}$.
In other words, our optimal $G$-equivariant model is
\begin{equation}
\begin{split}
    M_t[\phi] &= \int_{U \in G} U\, \nabla \log \pi_t[U^{-1} \phi]\,\frac{\pi_t(U^{-1}\phi)}{\int_{V\in G} \pi_t(V^{-1} \phi)\,dV} dU.
\end{split}
\end{equation}
We can do some simple algebra to write this experssion in a more interpretable form:
\begin{align*}
    \frac{\int_{U \in G} U\, \nabla \log \pi_t[U^{-1} \phi]\pi_t(U^{-1}\phi)\,dU}{\int_{U\in G} \pi_t(U^{-1} \phi)\,dU} = \frac{\int_{U \in G} U\nabla \pi_t[U^{-1}\phi]\,dU }{\int_{U \in G} \pi_{t}(U^{-1} \phi)\,dU}= \nabla_\phi \log \int_{U \in G} \pi_t[U^{-1} \phi]\,dU
\end{align*}
where in the last step we have used the fact that $U^{-1} = U^\dagger$ and that $\nabla_\phi f(U^\dagger \phi) = U [\nabla f](U^\dagger \phi) $. We now note that
\begin{align}
    \int_{U \in G} \pi_t[U^{-1} \phi]\,dU &= \frac{1}{|\mathcal{D}|} \sum_{\varphi \in \mathcal{D}} \int_{U \in G} \mathcal{N}(U^{-1} \phi; \varphi \sqrt{\bar{\alpha}_t}, (1 - \bar{\alpha}_t) I) dU.
\end{align}
Since $U$ is unitary, it follows that
\begin{align*}
    \mathcal{N}(U^{-1} \phi| \varphi\sqrt{\bar{\alpha}_t}, (1 - \bar{\alpha}_t)I) &\propto \exp(-\frac{\norm{U^{-1}\phi - \sqrt{\bar{\alpha}_t}\varphi}^2}{2(1-\bar{\alpha}_t)})\\
    &= \exp(-\frac{\norm{\phi - \sqrt{\bar{\alpha}_t} U \varphi}^2}{2(1 - \bar{\alpha}_t)})
\end{align*}
and thus our optimal model is the score function for the empirical noise distribution of the $G$-augmented dataset, i.e.
\begin{align}
    \pi_t^G(\phi) &= \frac{1}{|\mathcal{D}|} \sum_{\varphi \in \mathcal{D}} \int_{G(\varphi)}  \mathcal{N}(\phi; \sqrt{\bar{\alpha}_t}\varphi', (1 - \bar{\alpha}_t)I)\, d\varphi'\\
    M_t[\phi] = \nabla \log \pi_t^G(\phi) &= -\frac{1}{1 - \bar{\alpha}_t}\frac{\sum_{\varphi \in \mathcal{D}} \int_{G(\varphi)} (\phi - \sqrt{\bar{\alpha}_t} \varphi') \mathcal{N}(\phi|\sqrt{\bar{\alpha}_t} \varphi', (1 - \bar{\alpha}_t)I) \,d\varphi'}{\sum_{\varphi\in \mathcal{D}} \int_{G(\varphi)} \mathcal{N}(\phi|\sqrt{\bar{\alpha}_t} \varphi', (1 - \bar{\alpha}_t)I) \,d\varphi'}  
\end{align}

\end{proof}

\subsection{The sample distribution at $t = 0$ under a local score approximation}\label{app:sample_dist}

When the score is learned optimally, the reverse process concentrates the sample distribution on the training dataset as $t \to 0$. It is instructive for us to ask what the analogous constraint on the generated samples is for the locality-constrained models that we consider in this paper. The answer is that the flow will concentrate the probability on certain `locally consistent points' $\tilde{\phi}$, defined as follows. Suppose we are employing an $\Omega$-local approximation $M_t$ to the score function, with each individual pixel $x$ using a (possibly identical) dictionary of patches $\varphi \in P_\Omega^x$. A `locally consistent point' $\tilde{\phi}$ is a point such that for every pixel location $x$, the value $\tilde{\phi}(x)$ is equal to the center pixel $\varphi(0)$ of the $l_2$-closest patch $\varphi \in P_{\Omega}^x$ to the patch $\tilde{\phi}_{\Omega_x}$, i.e. the patch that minimizes $\norm{\varphi - \tilde{\phi}_{\Omega_x}}^2$ over all patches in $P_{\Omega}^x$.

The reverse flow approximation parameterized by $M_t$ will concentrate on locally consistent points. We can formalize this effect in the following theorem:
\begin{theorem}
    Suppose we sample an initial point $\phi_T$ from the Gaussian $\pi_T$, and we evolve this density under the standard reverse process
    \begin{align}\label{eq:simple_reverse}
        \partial_t \phi_t &= -\gamma_t(\phi_t + M_t(\phi_t))
    \end{align}
    where
    \begin{align}
        \gamma_t &= -\frac{\partial_t \bar{\alpha}_t}{2\bar{\alpha}_t}.
    \end{align}
    Suppose also that the limits $\lim_{t \to 0} \phi_t$ and $\lim_{t \to 0} \partial_t\phi_t$ exist for an initial point $\phi_T$. Then the limit must be a locally consistent point.
\end{theorem}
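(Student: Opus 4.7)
The plan is to substitute the local score formula into the reverse ODE, simplify the algebra, and then exploit the fact that a diverging prefactor multiplies a quantity whose limit must be finite; this forces the quantity to vanish, which is exactly the local-consistency condition.

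\textbf{Step 1: collapse the sum.} At a pixel $x$, the local/equivariant-local score takes the form
\[
M_t[\phi](x) = -\frac{1}{1-\bar\alpha_t}\sum_{\varphi\in P_\Omega^x(\mathcal D)}\!\big(\phi(x)-\sqrt{\bar\alpha_t}\,\varphi(0)\big)\,W_t(\varphi|\phi_{\Omega_x}),
\]
and since $\sum_\varphi W_t(\varphi|\phi_{\Omega_x})=1$, this simplifies to $-\frac{1}{1-\bar\alpha_t}\big(\phi(x)-\sqrt{\bar\alpha_t}\,\langle\varphi(0)\rangle_{t,x}\big)$, where $\langle\varphi(0)\rangle_{t,x}:=\sum_\varphi \varphi(0)\,W_t(\varphi|\phi_{\Omega_x})$. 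Plugging into the reverse flow and collecting terms yields
\[
\partial_t\phi_t(x)=\frac{\gamma_t\sqrt{\bar\alpha_t}}{1-\bar\alpha_t}\Big(\sqrt{\bar\alpha_t}\,\phi_t(x)-\langle\varphi(0)\rangle_{t,x}\Big).
\]

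\textbf{Step 2: force the bracket to vanish.} Using $\gamma_t=-\partial_t\bar\alpha_t/(2\bar\alpha_t)$ together with $\bar\alpha_0=1$, a short calculation shows that the prefactor $\gamma_t\sqrt{\bar\alpha_t}/(1-\bar\alpha_t)$ diverges as $t\to 0$ for any standard monotone noise schedule (for instance, with $\bar\alpha_t=1-ct+O(t^2)$ the prefactor scales like $1/(2t)$). Because $\lim_{t\to0}\partial_t\phi_t$ is assumed to exist and be finite, the bracketed factor must tend to zero. Combined with $\sqrt{\bar\alpha_t}\to 1$ and $\phi_t(x)\to\phi_0(x)$, this gives $\phi_0(x)=\lim_{t\to 0}\langle\varphi(0)\rangle_{t,x}$.

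\textbf{Step 3: take the zero-temperature softmax limit.} The weights $W_t(\varphi|\phi_{\Omega_x})$ are a softmax with energies $E_t(\varphi)=\tfrac{1}{2(1-\bar\alpha_t)}\|\phi_{t,\Omega_x}-\sqrt{\bar\alpha_t}\varphi\|^2$ and temperature $1-\bar\alpha_t\to 0$. By continuity and the existence of $\lim_{t\to 0}\phi_t$, the energy gap between the $L_2$-closest patch $\varphi^\ast\in P_\Omega^x(\mathcal D)$ to $\phi_{0,\Omega_x}$ and any other patch is bounded below by a positive constant times that gap at $t=0$, so that $W_t(\varphi^\ast|\phi_{t,\Omega_x})\to 1$ and $\langle\varphi(0)\rangle_{t,x}\to\varphi^\ast(0)$. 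Combined with Step 2, this yields $\phi_0(x)=\varphi^\ast(0)$, which is exactly local consistency: each output pixel equals the center of the patch in $P_\Omega^x(\mathcal D)$ nearest in $L_2$ to the local image neighborhood at that pixel.

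\textbf{Main obstacle.} The delicate point is the degenerate case in which two or more patches tie for the minimum $L_2$ distance to $\phi_{0,\Omega_x}$, where the zero-temperature limit of the softmax is not a delta function. I would handle this by arguing that the tie set is a codimension-one subvariety in $\phi$-space, so for generic initial noise $\phi_T$ it has measure zero along the reverse trajectory at $t=0$; on this measure-zero exceptional set the limit $\phi_0(x)$ is still a convex combination of the center pixels of the tied patches, which sits on the boundary between locally consistent cells rather than inside one. A secondary technical point is verifying divergence of the $\gamma_t\sqrt{\bar\alpha_t}/(1-\bar\alpha_t)$ prefactor uniformly across the noise schedules used in practice, which amounts to a short check that $\partial_t\bar\alpha_t$ does not vanish faster than $1-\bar\alpha_t$ at $t=0$.
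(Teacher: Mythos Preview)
Your proposal is correct and follows essentially the same route as the paper: identify a prefactor $\sim 1/t$ that diverges as $t\to 0$, use finiteness of $\lim_{t\to 0}\partial_t\phi_t$ to force the remaining factor to vanish, and then take the zero-temperature softmax limit of the weights to conclude $\phi_0(x)=\varphi^\ast(0)$. The only cosmetic difference is that you first collapse the sum and combine with the $\phi_t$ term of the ODE (yielding the prefactor $\gamma_t\sqrt{\bar\alpha_t}/(1-\bar\alpha_t)$), whereas the paper isolates $\gamma_t M_t(\phi_t)$ directly (prefactor $\gamma_t/(1-\bar\alpha_t)$); your additional remarks on the tie set and the noise-schedule check are sound refinements that the paper leaves implicit.
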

\begin{proof}
    The assumption that $\lim_{t \to 0} \partial_{t} \phi_t$ exists entails that for any point $\phi_t$ on a particular trajectory, the values of $\phi_t$ and $ -\gamma_t (\phi_t + M_t(\phi_t))$ must stay bounded as $t \to 0$, which in turn entails that $\gamma_t M_t(\phi_t)$ must likewise stay bounded as $t \to 0$. This latter quantity is given at pixel location $x$ by
    \begin{align}
        \lim_{t \to 0} \gamma_t M_t[\phi](x) &= \lim_{t \to 0} -\frac{\partial_t \bar{\alpha}_t}{2\bar{\alpha}_t(1 - \bar{\alpha}_t)} \sum_{\varphi \in P^x_{\Omega}} (\phi_t(x) - \sqrt{\bar{\alpha}_t} \varphi(0)) W(\varphi|\phi,x)\\
        W(\varphi|\phi,x) &= \frac{\mathcal{N}(\phi_{\Omega_x}|\sqrt{\bar{\alpha}_t} \varphi,(1- \bar{\alpha}_t)I)}{\sum_{\varphi' \in P_{\Omega}^x} \mathcal{N}(\phi_{\Omega_x}|\sqrt{\bar{\alpha}_t} \varphi',(1- \bar{\alpha}_t)I)}.
    \end{align}
    The prefactor goes to $\infty$ as $t^{-1}$ as $t \to 0$, so it follows that for the derivative to have a finite limit, the right-hand factor must go to zero. As $\bar{\alpha}_t \to 0$, the weights take the limiting values
    \begin{align}
        \lim_{t \to 0} W(\varphi|\phi, x) = \begin{cases}
            1 & \varphi = \argmin_{\varphi' \in P_\Omega^x}\{ \norm{\phi_{\Omega_x} - \varphi'}^2 \}\\
            0 & \text{else}
        \end{cases}
    \end{align}
    and thus the limiting value of the sum is simply
    \begin{align}
        (\tilde{\phi}(x) - \tilde{\varphi}(0))
    \end{align}
    where $\tilde{\varphi} = \argmin_{\varphi' \in P_\Omega^x}\{ \norm{\phi_{\Omega_x} - \varphi'}^2 \}$. This value is zero only when $\tilde{\phi}(x) = \tilde{\varphi}(0)$. The condition that this holds for all $x$ is the definition of a locally consistent point.
\end{proof}

\section{Empirics}\label{app:empirics}

\subsection{Experimental details}\label{sec:exp_details}

To test our ELS machine model of CNN-based diffusion, we examine two different architectures:
\begin{enumerate}%[itemsep=0]
    \item UNet: we use a standard UNet \cite{ronneberger2015u} with three scales with channel dimensions of $64,128,256$ respectively. We use residual connections in each UNet block. This model is formally local, but has a maximum receptive field size larger than the $32\times 32$ images we consider.
    \item ResNet: we use a minimal 8-layer convolutional neural network, with an upscaling and downscaling layer and 6 intermediate convolutional layers at a channel dimension of 256. Each layer is a single convolutional layer with a kernel size of $3 \times 3$ and with residual connections \cite{he2016deep} between layers. This model has a formal maximum receptive field size of $17 \times 17$.
\end{enumerate}
For all experiments, we train each model for 300 epochs with Adam, using an initial learning rate of 1e-4, a batchsize of 128, and an exponential learning rate schedule that applies a multiplicative factor of 0.999965 to the learning rate with each step (this approximately halves the learning rate over the course of 50 epochs with our batch size of 128).\footnote{Code for the following experiments hosted at https://github.com/Kambm/convolutional\_diffusion} We do not employ normalization layers in any of our models in order to avoid the possibility of information being exchanged nonlocally throughout the image. We do not use weight decay. We sample images from the theoretical processes (IS/LS/ELS) using a 20-step discretization of the reverse process, using the analytic form prescribed for DDIM-style models in \cite{song2020denoising}. We sample from the CNN models using 20-step and 150-step discretizations, and report the respective results in tables \ref{tab:correlation_results} and \ref{tab:highcompute_correlations}. As evidenced, we find that the 150-step discretization outputs better match the theoretical outputs than the 20-step discretization; no theoretical explanation for this has been suggested. We use a cosine noise schedule \cite{nichol2021improved} for each experiment.

We evaluate each architecture using zero-padded convolutions on the following datasets: MNIST, FashionMNIST, CIFAR10, and CelebA. For each of the latter datasets, we use class-conditioning. We only train class-unconditional models on MNIST and CelebA. In addition, we evaluate our ResNet architecture using \textit{circularly-padded convolutions} on CIFAR10 (class-conditional) and MNIST (class-unconditional). 

For each neural network on each dataset, we calibrate an associated multiscale LS model and an associated multiscale ELS model of the network using the procedure described in \ref{sec:multiscale}. The (E)LS model inherits the class-conditionality of the neural network it is modeling. We then compute the outputs of each neural network on 100 distinct random noise inputs for each dataset, drawn iid from an isotropic normalized Gaussian distribution. For class-conditional models, we additionally sample a label for each seed. We compute the outputs of the corresponding ELS machine on the same seeds/labels. For each example, we compute the pixelwise $r^2$ between the (E)LS machine outputs and the network output. We also compute the outputs of the IS machine across all of the same inputs/labels, and compute the pixelwise $r^2$ between this baseline and the neural network outputs. For all except the CelebA UNet we find that the ELS machine is the best predictor; for this case, we find that the LS machine is significantly more predictive, indicating that the model is able to use positional information in the interior of the image. We report the median $r^2$ value across the 100 samples for all configurations in table \ref{tab:correlation_results}, and plot the distribution of ELS correlations/IS correlations in figures \ref{fig:a1} and \ref{fig:a2} (LS/IS correlations for the CelebA UNet).

We also repeat this analysis procedure for a pretrained self-attention-enabled UNet trained on CIFAR10 from \cite{VSehwag_minimal_diffusion}, with the exception that we re-use the scales calibrated for the CIFAR10 ResNet model rather than re-estimating them for the Self-Attention-enabled model in order to minimize bias. 

\subsection{Identifying multiscale behavior}\label{sec:multiscale}

In order to correctly recapitulate the behavior of the models we study, we need to account for a crucial empirical observation: \textit{convolutional diffusion models exhibit time-dependent effective receptive field sizes}. This behavior is illustrated in figure \ref{fig:scales_jacobian}. In the left panel, we display an average absolute value of the  gradient of the center pixel of the model outputs from two of our CIFAR10-trained diffusion models, with respect to the input image. We plot this at various time steps in the reverse process (with the center pixel omitted for visual clarity). This visualization highlights which areas of the image the center pixel's outputs are sensitive to, an indicator of the degree of locality in the model's output. At $T = 1.0$ (corresponding to an input of initial white noise), the average gradient spans a large range (for the ResNet, a range clearly constrained by its maximal receptive field size). As the noise level reduces throughout the reverse process, the width of the heatmap decreases, until at the last time step the heatmap is almost entirely concentrated in a single ($3 \times 3$) square.

To calibrate the time-dependent locality scale of our theoretical model, we compute the reverse trajectories under the CNN-parameterized neural networks for a random validation set. At each time step, we compare the predictions of the model for the added noise and the outputs of ELS machines with a range of scales via cosine similarity. We then pick the representative scale for each time step by picking the median optimal scale across the range of samples. The resulting calibrated scales are shown in the middle panel of figure \ref{fig:scales_jacobian}. We see that the UNet is better described by a larger-scale ELS machine early in the reverse process than its ResNet counterparts, a phenomenon that can probably be linked to the more stringent locality constraints in the latter model. However, as the reverse process continues, both models prefer monotonically smaller scales, until converging to the smallest scale ($3 \times 3$) for the final few denoising steps. These results are in accordance with the visual evidence from the gradient heatmaps in figure \ref{fig:scales_jacobian}.

At this stage we have no a-priori method for predicting the scales that the models choose to use at each time step. However, the general phenomenon where the model initially starts with a large field of view and decreases it over the course of the reverse process could be anticipated on general grounds. As the noise variance decreases, the noised training distribution separates into a multimodal distribution with a larger and larger number of modes; as $t \to 0$, the number of modes converges to the (very large!) number of training examples. Since the models we consider are equivariant, an optimal model must also in principle represent not just the modes corresponding to the training set, but also to every translated augmentation, which for a $32 \times 32$ image results in a 1024-fold increase in effective dataset size! However, the emergence of multimodality is delayed when considering only the marginal distributions with respect to a smaller scale, as there are fewer dimensions via which two distinct data points could be distinguished from each other. This suggests that the model may somehow be picking the largest scale that it can a) represent within its receptive field (a constraint more pertinent to the ResNet, which has a smaller maximum receptive field size) and b) for which it can represent the local/equivariant approximation to the score function in a \textit{reasonably parameter efficient way}, i.e. for which it need not model too many independent modes of the data distribution. However, more work needs to be done in order to understand this phenomenon.

\begin{figure}
    \centering
    \includegraphics[width=0.3\linewidth]{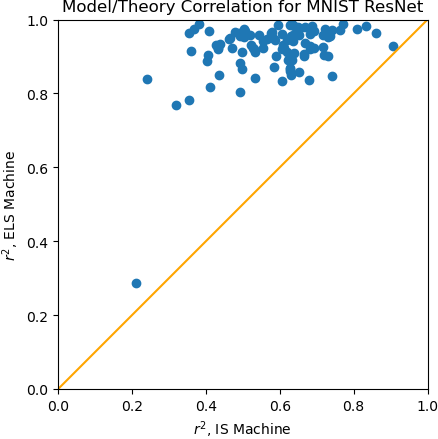}
    \includegraphics[width=0.3\linewidth]{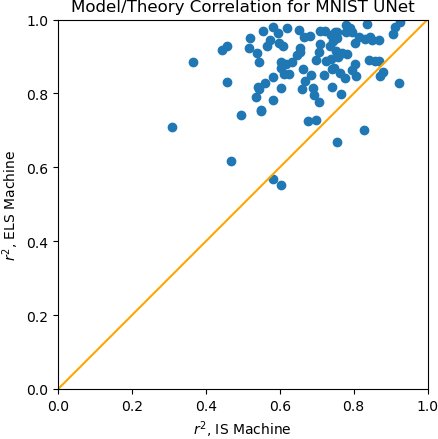}\\
    \includegraphics[width=0.3\linewidth]{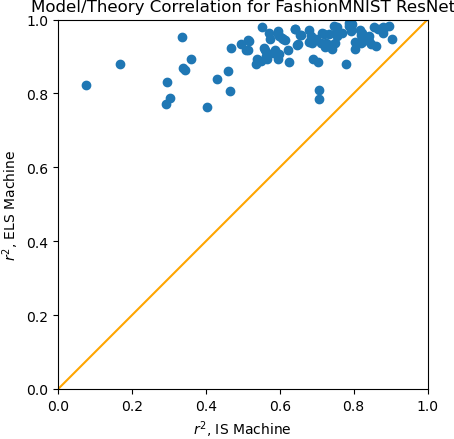}
    \includegraphics[width=0.3\linewidth]{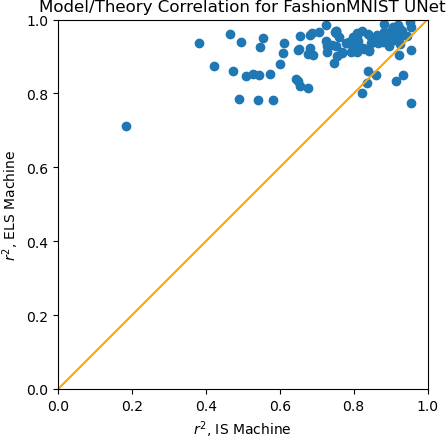}\\
    \includegraphics[width=0.3\linewidth]{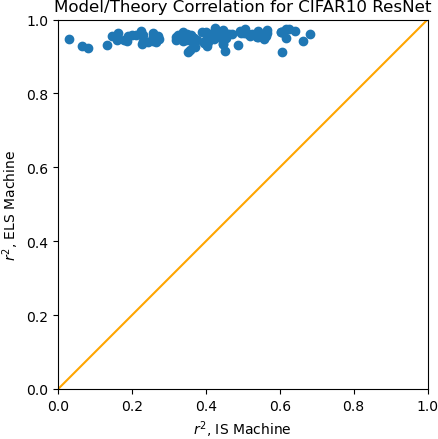}
    \includegraphics[width=0.3\linewidth]{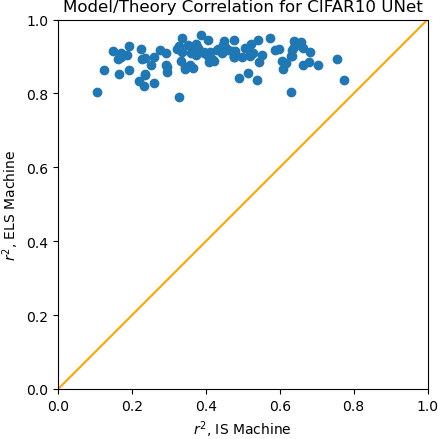}\\
    \includegraphics[width=0.3\linewidth]{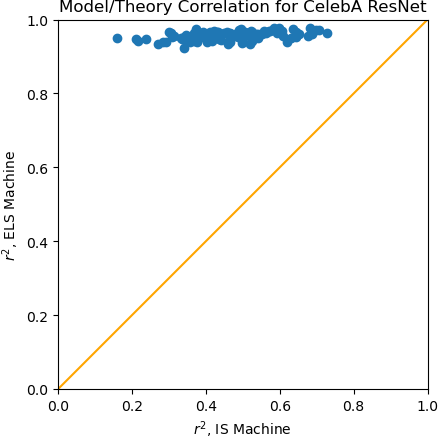}
    \includegraphics[width=0.3\linewidth]{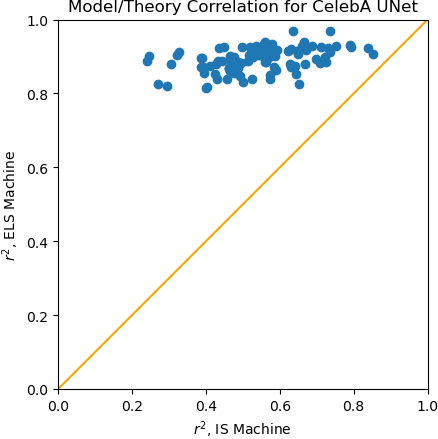}\\

    \caption{Correlations between model outputs and (E)LS machine/IS baseline on each dataset for zero-padded models. LS machine is used for the CelebA UNet, ELS machine for all other models. Y axis is (E)LS machine $r^2$, X axis is IS baseline $r^2$ for each data point in the sample. The (E)LS machine uniformly outperforms the memorizing baseline.}
    \label{fig:a1}
\end{figure}

\begin{figure}
    \centering
    \includegraphics[width=0.3\linewidth]{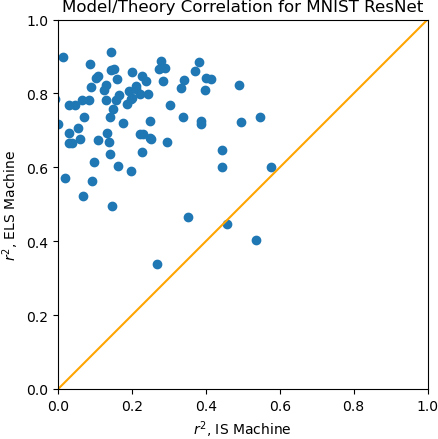}
    \includegraphics[width=0.3\linewidth]{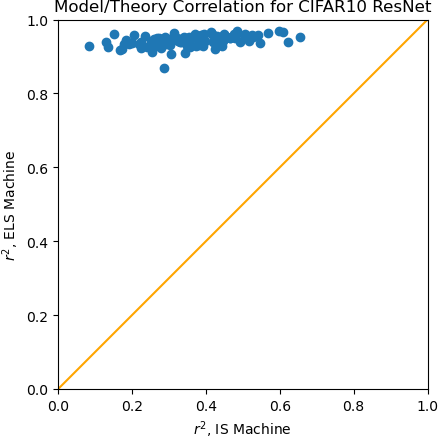}

    \caption{Correlations between model outputs and ELS machine/IS baseline on each dataset for circularly-padded models. Y axis is ELS machine $r^2$, X axis is IS baseline $r^2$ for each data point in the sample. The ELS machine uniformly outperforms the baseline. The performance of the ELS machine on circular MNIST is anomalously lower than other configurations, but the degree of outperformance of the ideal score baseline is higher.}
    \label{fig:a2}
\end{figure}

\begin{figure}
    \centering
    \includegraphics[width=0.35\linewidth]{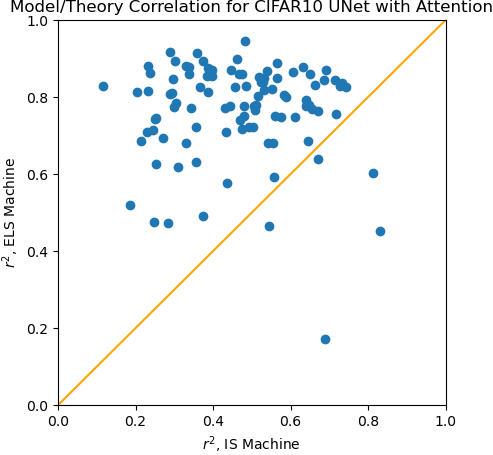}

    \caption{Correlations between model outputs and ELS machine/IS baseline on CIFAR10 for a pretrained Attention-enabled UNet. Y axis is ELS machine $r^2$, X axis is IS baseline $r^2$ for each data point in the sample.}
    \label{fig:a3}
\end{figure}

% \FloatBarrier

\section{Results}\label{appendix:samples}
% \newpage

\begin{table*}
    \centering
    \caption{A summary of the experimental results of the paper for different datasets and model configurations for each architecture across each dataset, with 20 steps in the theoretical reverse process and 150 steps in the neural network reverse process. Pixelwise $r^2$ between theory and neural network images are computed using 100 image samples per configuration; the median across the sample is reported. We compare these results to a baseline consisting of the correlations of the model with the outputs of an ideal score (IS) model, which always outputs memorized training examples. We also report the percentage of samples on which the ELS machine outperforms the output from the ideal score-matching diffusion model.}
    \vspace{2mm}
\begin{tabular}{c c c c|c c c c}
    \toprule
    Dataset & Arch. & Padding & Conditional & ELS Corr. & LS Corr. & IS Corr. & (E)LS $>$ IS \% \\
    \hline
    MNIST & UNet & Zeros & \xmark & \textbf{0.89} & 0.88 & 0.70 & 0.93 \\
    CIFAR10 & UNet & Zeros & \checkmark & \textbf{0.90} & 0.87 &  0.41 & 0.92 \\
    FashionMNIST & UNet & Zeros & \checkmark & \textbf{0.93} & 0.93 & 0.80 & 1.00 \\
    CelebA & UNet & Zeros & \xmark & 0.85  & \textbf{0.90} & 0.55 & 1.00 \\
    \midrule
    MNIST & ResNet & Zeros & \xmark & \textbf{0.94} & 0.82 & 0.61 & 1.00 \\
    MNIST & ResNet & Circular & \xmark & \textbf{0.77} & 0.36 & 0.15 & 0.92 \\
    CIFAR10 & ResNet & Zeros & \checkmark & \textbf{0.95} & 0.90 & 0.42 & 1.00 \\
    CIFAR10 & ResNet & Circular & \checkmark & \textbf{0.94} & 0.83 & 0.35 & 1.00 \\
    FashionMNIST & ResNet & Zeros & \checkmark & \textbf{0.94} & 0.88 & 0.68 & 1.00 \\
    CelebA & ResNet & Zeros & \xmark & \textbf{0.96} & 0.90 & 0.47  & 1.00 \\
    \midrule
    CIFAR10 & UNet + SA & Zeros & \checkmark & \textbf{0.77} & 0.77 & 0.48 & 0.95 \\
    \bottomrule

\end{tabular}

    \caption{A summary of the experimental results of the paper for different datasets and model configurations for each architecture across each dataset, with 20 steps in the theoretical reverse process and 20 steps in the neural network reverse process.}
    \vspace{2mm}

    \begin{tabular}{c c c c|c c c  c}
        \toprule
         Dataset & Arch. & Padding & Conditional  & ELS Corr. & LS Corr. & IS Corr.  & (E)LS $>$ IS \% \\
         \hline
         MNIST & UNet & Zeros & \xmark & \textbf{0.84}  & 0.83 & 0.69 & 0.92 \\
        CIFAR10 & UNet & Zeros & \checkmark & \textbf{0.82} & 0.80 & 0.39 & 0.99 \\
         FashionMNIST & UNet & Zeros & \checkmark  & \textbf{0.91} & {0.91} & 0.80 &  0.90 \\
         CelebA & UNet & Zeros & \xmark & 0.75 & \textbf{0.81} & 0.51 & 1.00 \\
        \midrule
        MNIST & ResNet & Zeros & \xmark &  \textbf{0.94} & 0.84 & 0.62 &  0.97 \\
        MNIST & ResNet & Circular & \xmark &  \textbf{0.73} & 0.33 & 0.14 &  0.99 \\
        CIFAR10 & ResNet & Zeros & \checkmark  & \textbf{0.90} & 0.86 & 0.43 &  1.00 \\ 
        CIFAR10 & ResNet & Circular & \checkmark  & \textbf{0.90} & 0.80 & 0.36 &  1.00 \\ 
         FashionMNIST & ResNet & Zeros & \checkmark &  \textbf{0.90} & 0.88 & 0.74 &  0.97 \\
         CelebA & ResNet & Zeros & \xmark & \textbf{0.92} & 0.88 & 0.48  & 1.00 \\
        \midrule
        CIFAR10 & UNet + SA & Zeros & \checkmark & \textbf{0.75} & 0.74 & 0.47 &  0.92 \\
        \bottomrule
    \end{tabular}

    \label{tab:correlation_results}
\end{table*}

% \begin{table*}
%     \centering

%     \label{tab:highcompute_correlations}
% \end{table*}

\begin{figure}
    \centering
    \includegraphics[width=0.9\linewidth]{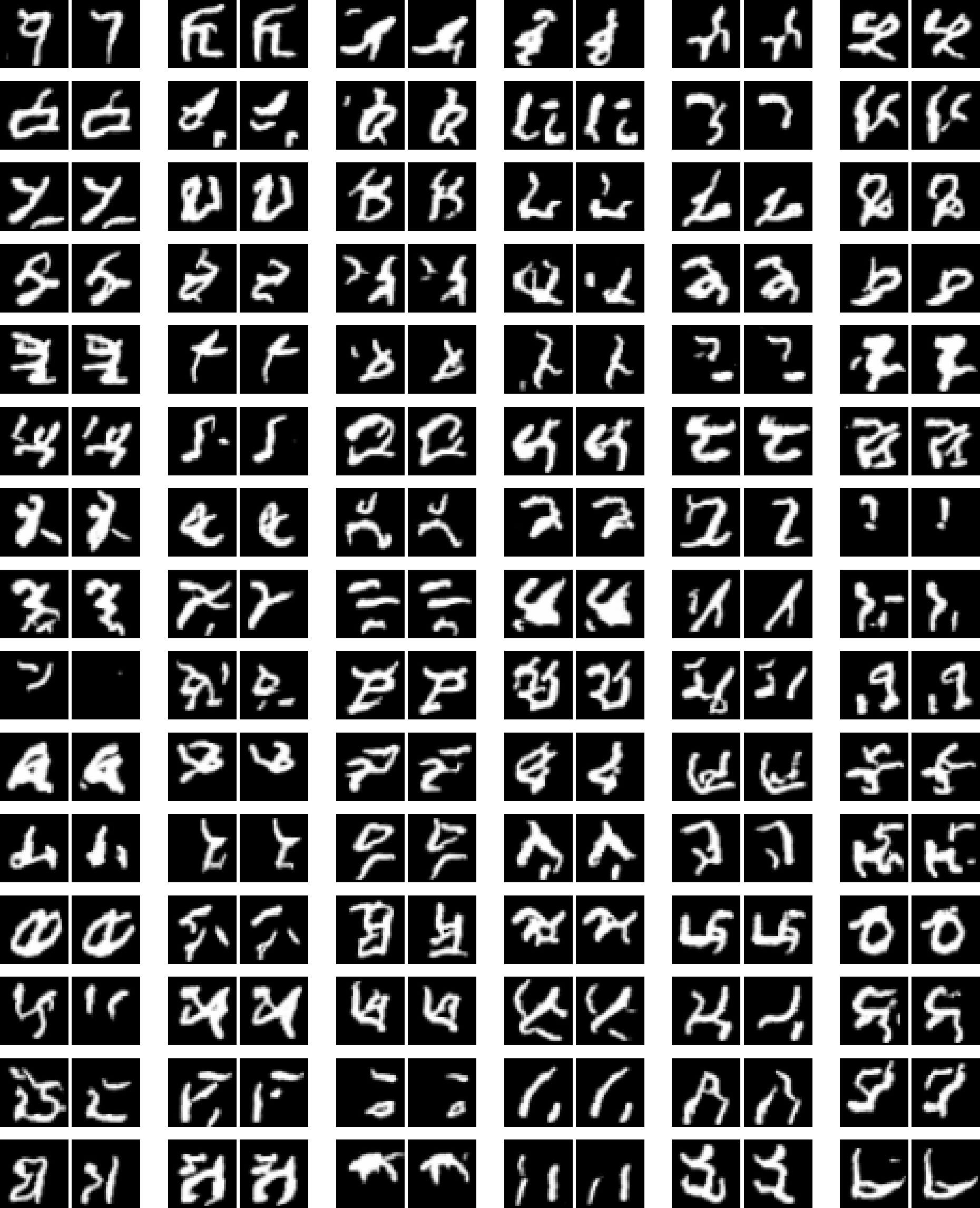}
    \caption{Further comparison between ResNet (right columns) and ELS Machine (left columns) samples for MNIST. Model is unconditional and has zero padding.}
    \label{fig:resnet-mnist-zeros}
\end{figure}

\begin{figure}
    \centering
    \includegraphics[width=0.9\linewidth]{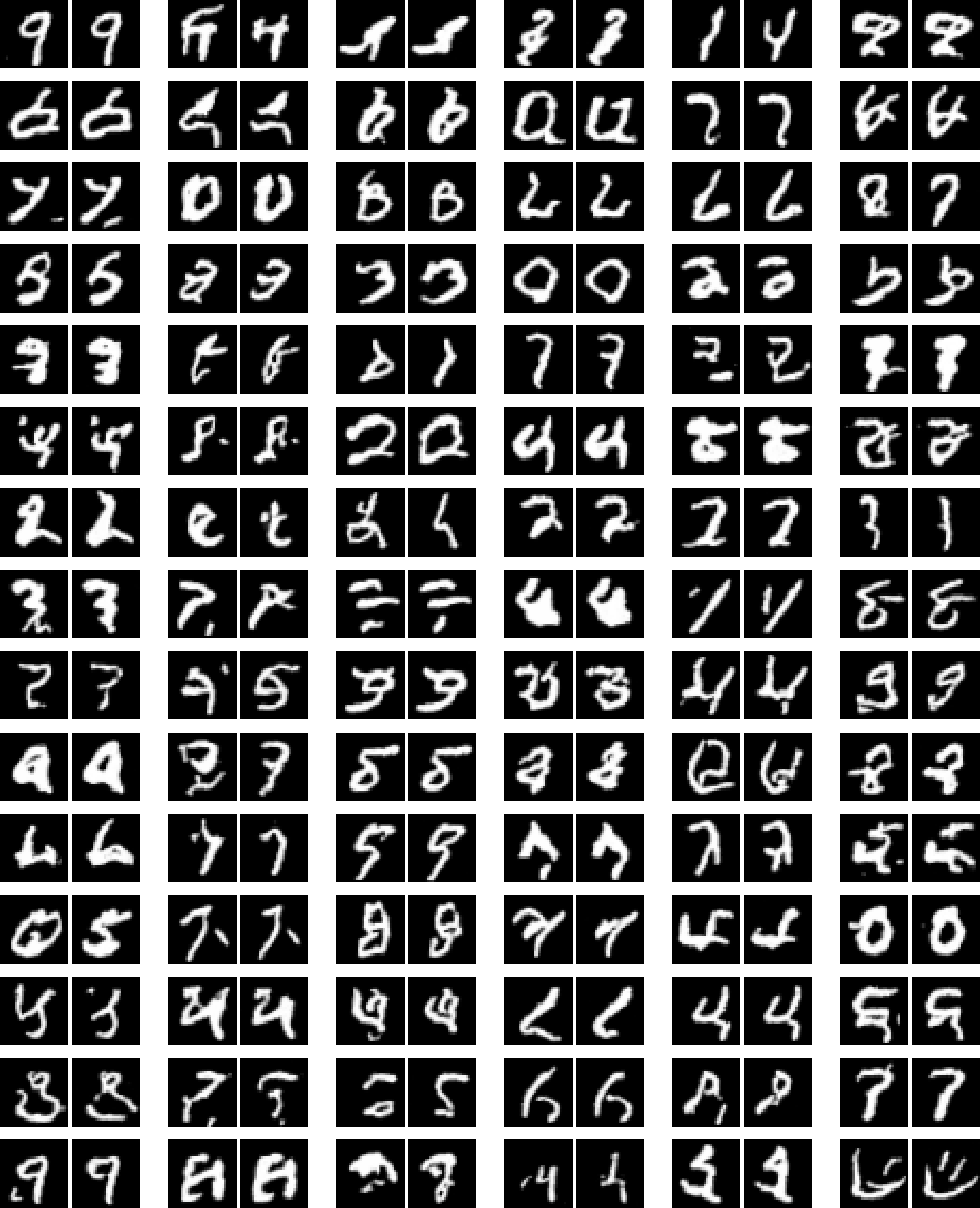}
    \caption{Further comparison between UNet (right columns) and ELS machine (left columns) samples for MNIST. Model is unconditional and has zero padding.}
    \label{fig:unet-mnist-zeros}
\end{figure}

\begin{figure}
    \centering
    \includegraphics[width=0.9\linewidth]{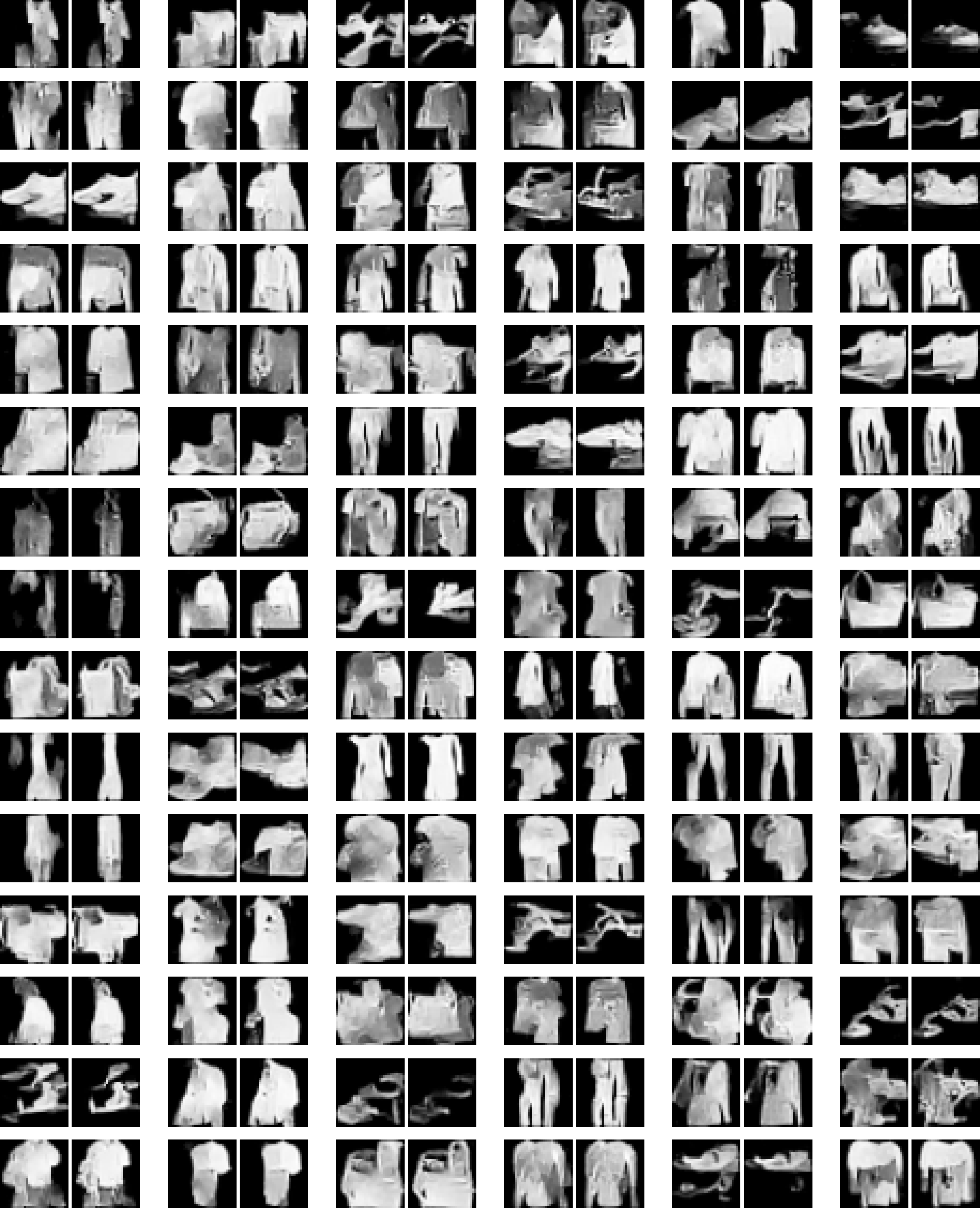}
    \caption{Further comparison between ResNet (right columns) and ELS Machine (left columns) samples for FashionMNIST. Model is class conditional and has zero padding.}
\end{figure}

\begin{figure}
    \centering
    \includegraphics[width=0.9\linewidth]{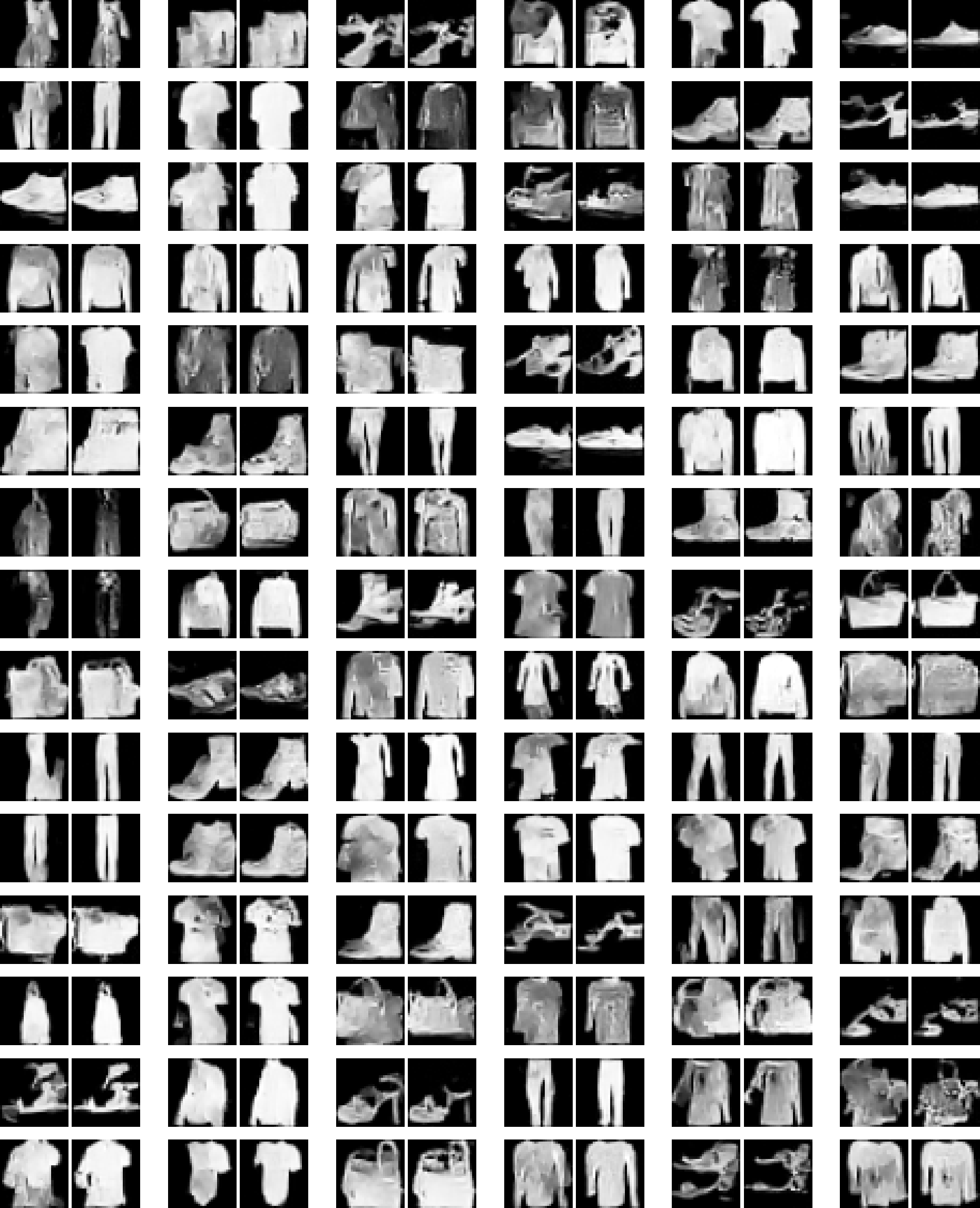}
    \caption{Further comparison between UNet (right columns) and ELS machine (left columns) samples for FashionMNIST. Model is class conditional and has zero padding.}
    \label{fig:fashion_mnist-zeros}
\end{figure}

\begin{figure}
    \centering
    \includegraphics[width=0.9\linewidth]{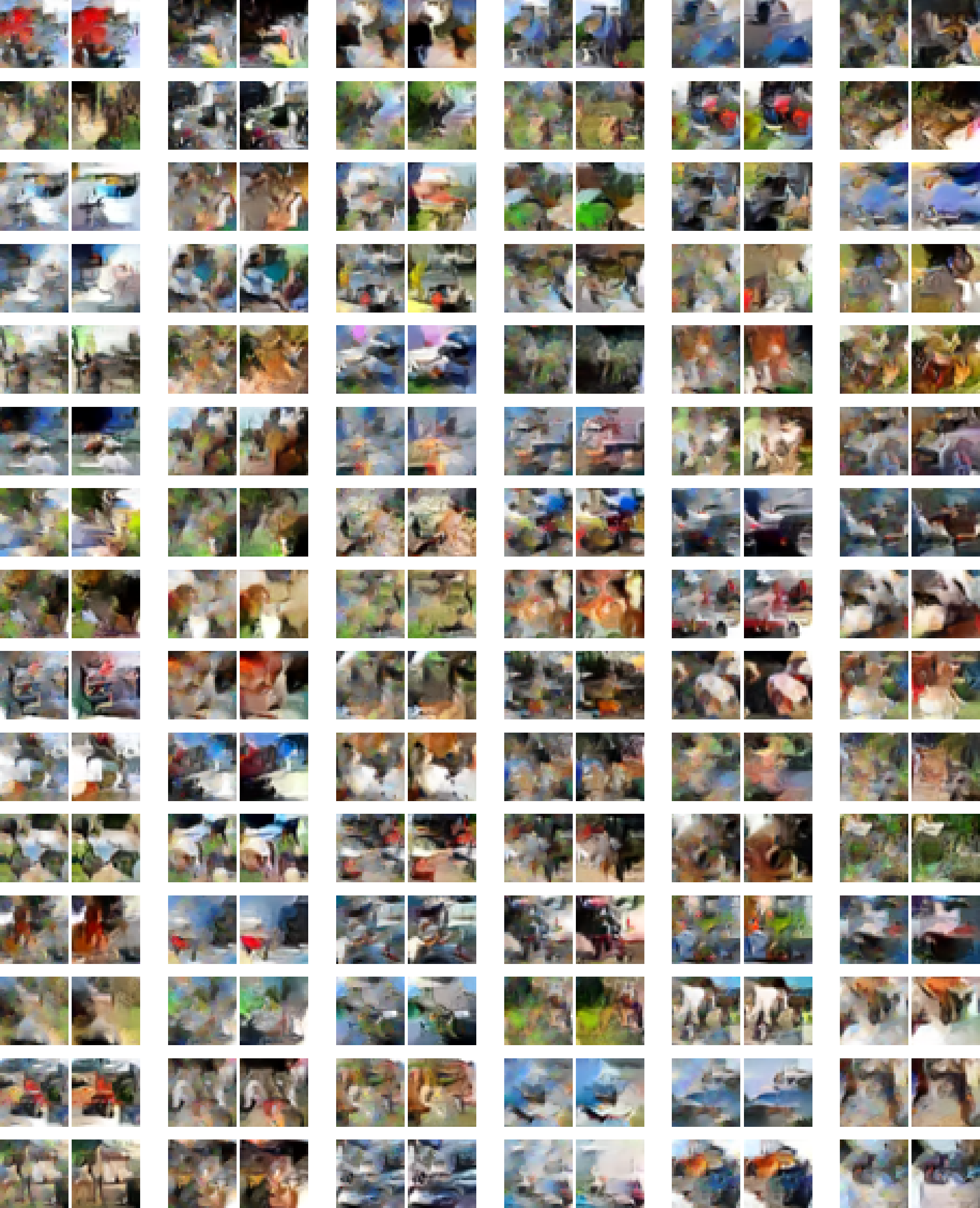}
    \caption{Further comparison between ResNet (right columns) and ELS machine (left columns) samples for CIFAR10. Model is class conditional and uses zero padding.}
    \label{fig:cifar10-zeros-resnet}
\end{figure}

\begin{figure}
    \centering
    \includegraphics[width=0.9\linewidth]{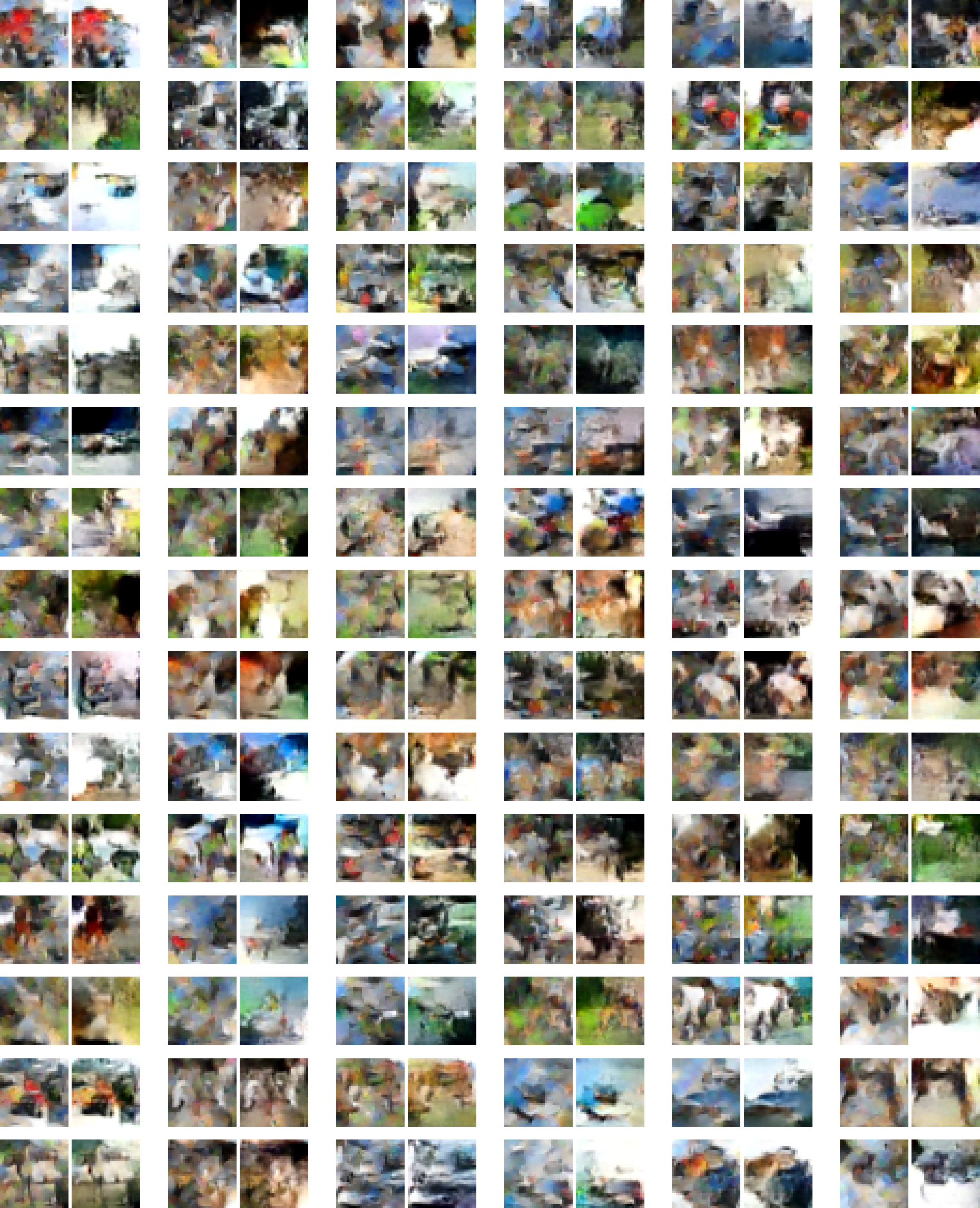}
    \caption{Further comparison between UNet (right columns) and ELS machine (left columns). Model is class conditional and has zero padding.}
    \label{fig:cifar10-zeros}
\end{figure}

\begin{figure}
    \centering
    \includegraphics[width=0.9\linewidth]{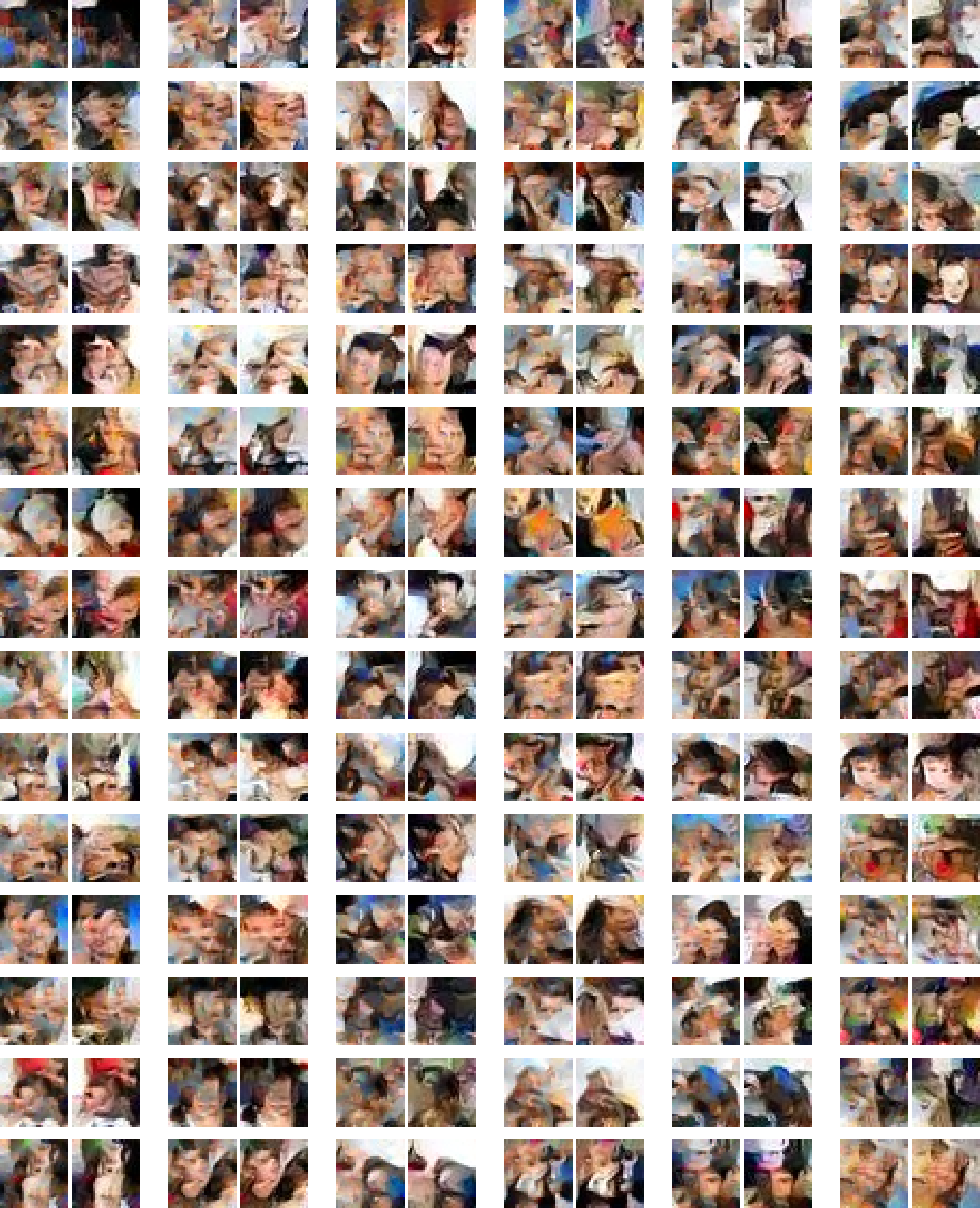}
    \caption{Further comparison between UNet (right columns) and ELS machine (left columns) on CelebA32x32. Model is class conditional and has zero padding. Model samples use 150 timesteps, ELS samples use 20.}
    \label{fig:celeba-zeros-resnet}
\end{figure}

\begin{figure}
    \centering
    \includegraphics[width=0.9\linewidth]{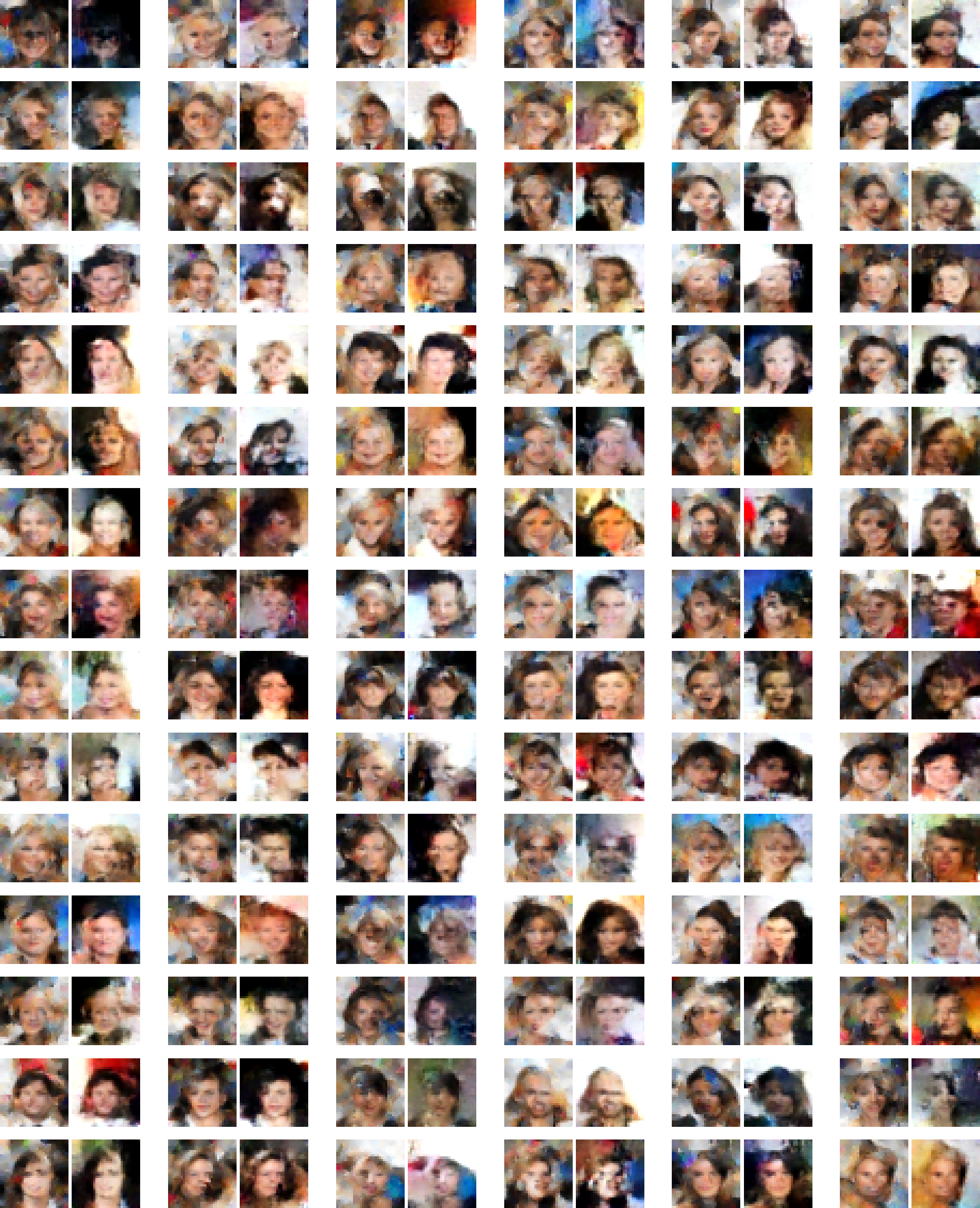}
    \caption{Further comparison between UNet (right columns) and ELS machine (left columns) on CelebA32x32. Model is class conditional and has zero padding. Model samples use 150 timesteps, ELS samples use 20.}
    \label{fig:celeba-zeros-unet}
\end{figure}

% \FloatBarrier

\begin{figure}
    \centering
    \includegraphics[width=0.9\linewidth]{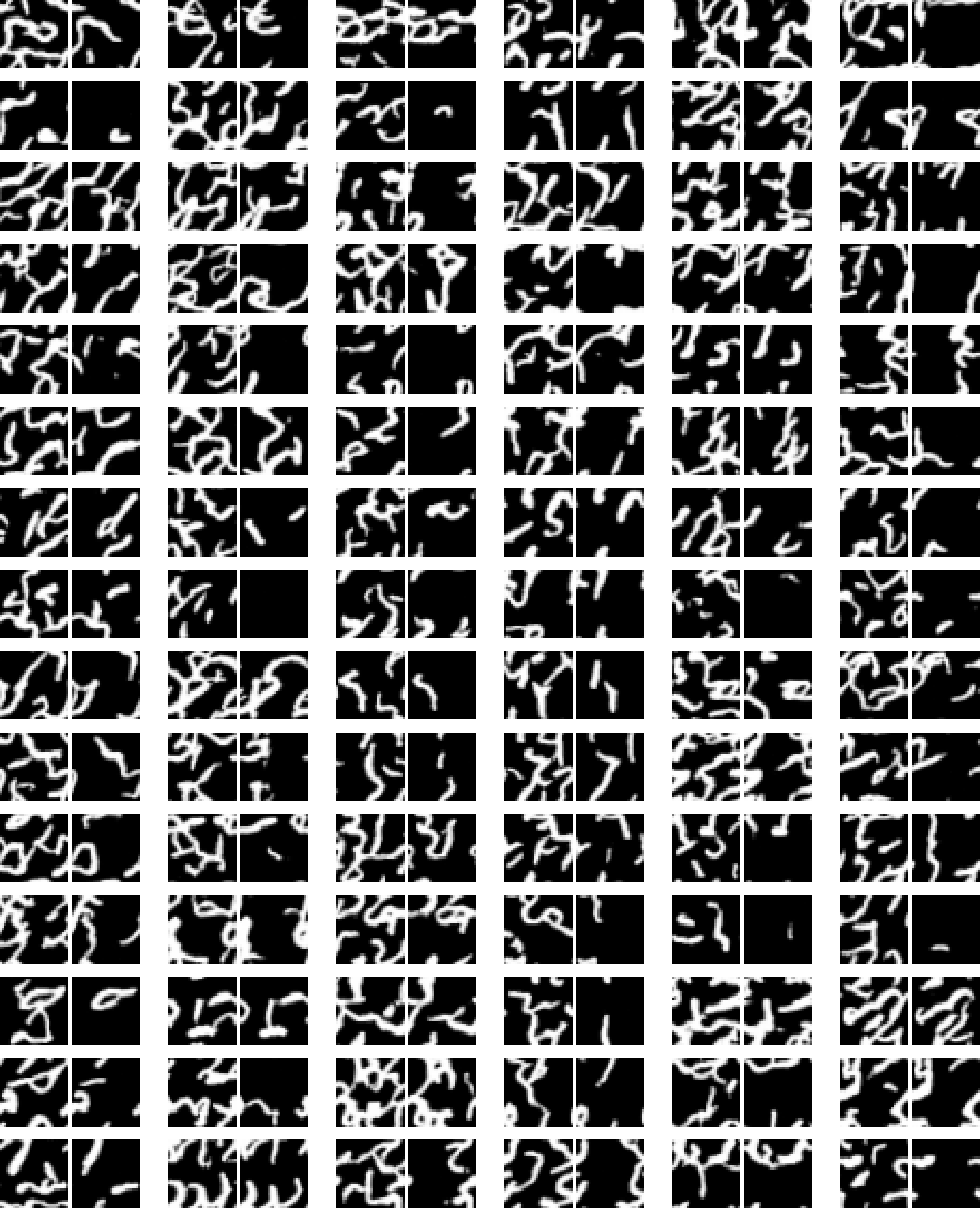}
    \caption{Further comparison between ResNet (right columns) and ELS machine (left columns) on MNIST. Model is unconditional and has circular padding.}
    \label{fig:mnist-circular-resnet}
\end{figure}

\begin{figure}
    \centering
    \includegraphics[width=0.9\linewidth]{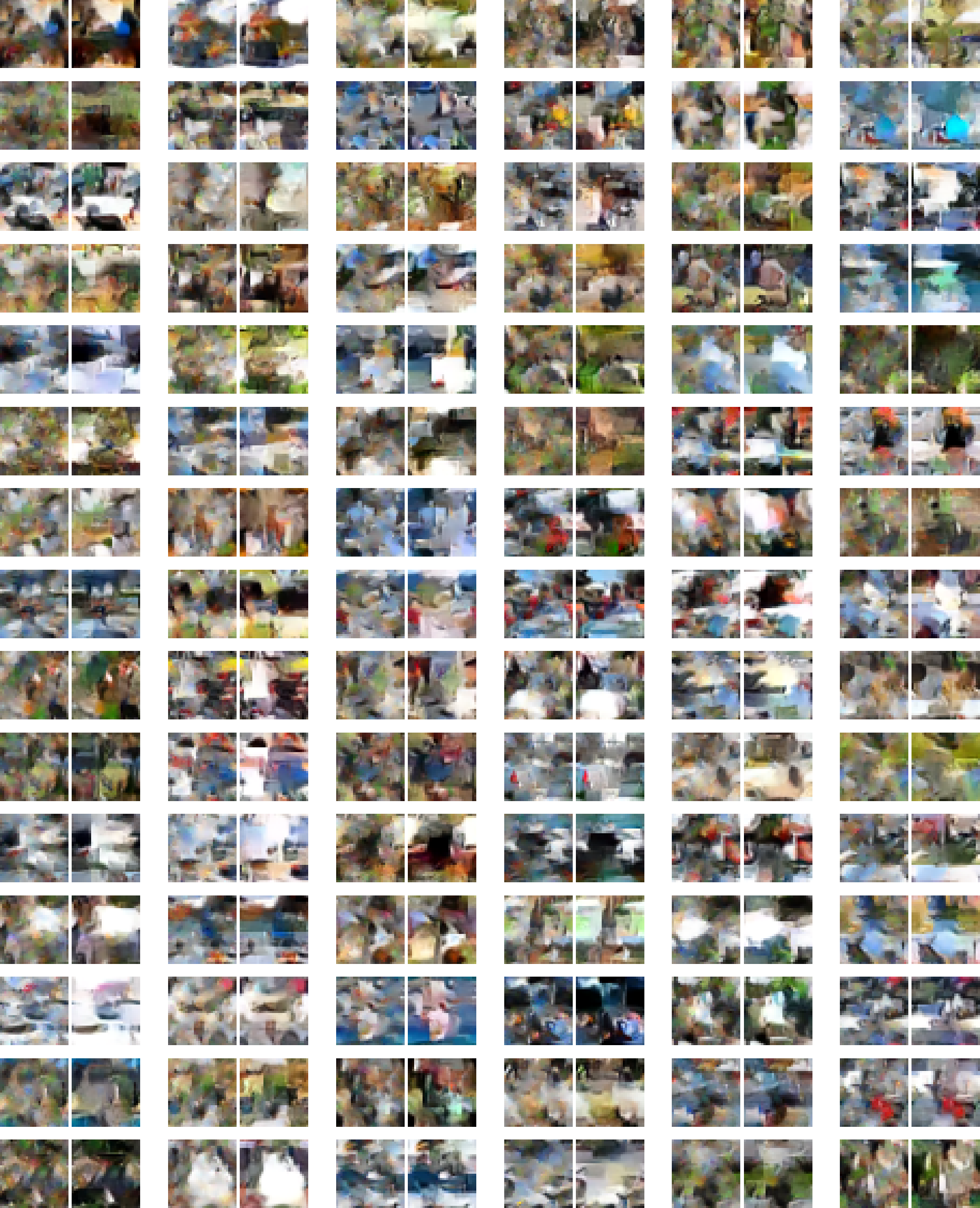}
    \caption{Further comparison between ResNet (right columns) and ELS machine (left columns) samples on CIFAR10. Model is class conditional and has circular padding.}
    \label{fig:cifar10-circular-resnet}
\end{figure}

\begin{figure}
    \centering
    \captionsetup{belowskip=5pt}

    \begin{subfigure}{0.42\textwidth}
        \caption{Incoherent output}
        \includegraphics[width=\linewidth]{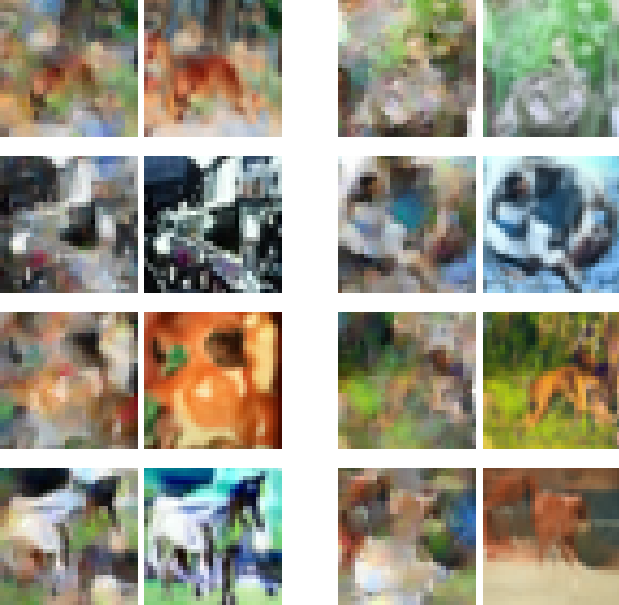}
        \label{fig:cifar10-att-incoherent}
    \end{subfigure}%
    \hspace{10mm}%
    \begin{subfigure}{0.42\textwidth}
        \caption{Non-matching output}
        \includegraphics[width=\linewidth]{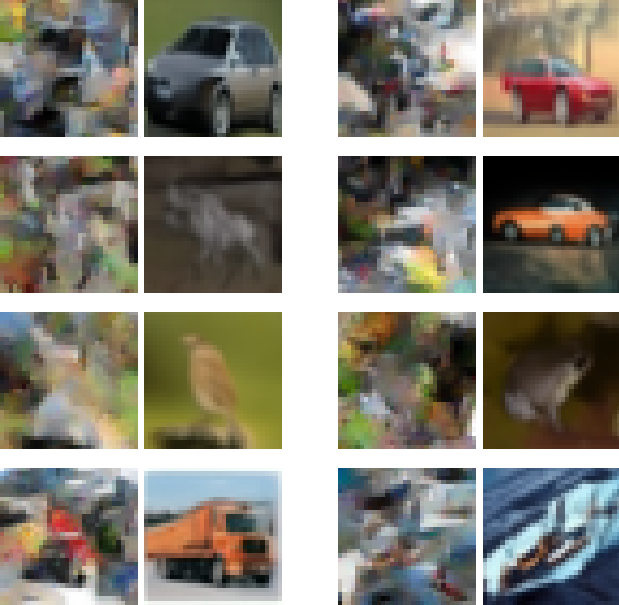}
        \label{fig:cifar10-att-nomatch}
    \end{subfigure}

    \caption{ELS Machine (left) vs. attention-enabled UNet (right) pairs. Panel (a) shows outputs where the Attentive UNet produces semantically incoherent, uninterpretable outputs, which tend to match strongly with the ELS Machine outputs. Panel (b) shows examples where the Attentive UNet produces samples not obviously matched with the ELS machine.}
\end{figure}

\begin{figure}
    \centering
    \includegraphics[width=0.9\linewidth]{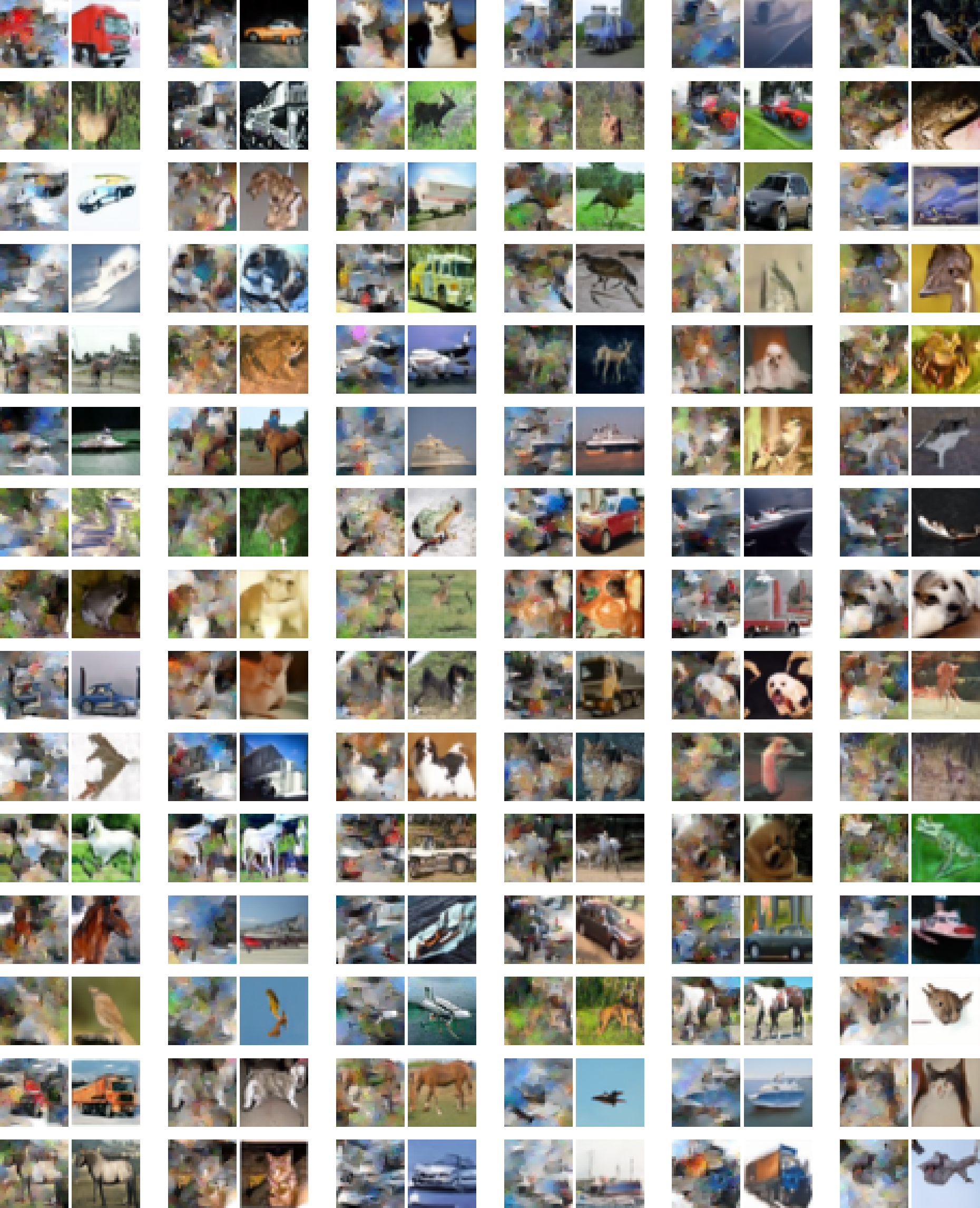}
    \caption{Further comparison between attention-enabled UNet (right columns) and ELS machine (left columns) samples on CIFAR10. Model is class conditional and has zero padding.}
    \label{fig:cifar10-zeros-attention}
\end{figure}

% \end{appendices}

%%%%%%%%%%%%%%%%%%%%%%%%%%%%%%%%%%%%%%%%%%%%%%%%%%%%%%%%%%%%%%%%%%%%%%%%%%%%%%%
%%%%%%%%%%%%%%%%%%%%%%%%%%%%%%%%%%%%%%%%%%%%%%%%%%%%%%%%%%%%%%%%%%%%%%%%%%%%%%%

\end{document}